\DeclareFontFamily{U}{mathx}{}
\DeclareFontShape{U}{mathx}{m}{n}{<-> mathx10}{}
\DeclareSymbolFont{mathx}{U}{mathx}{m}{n}
\DeclareMathAccent{\widecheck}{0}{mathx}{"71}
\begin{document}

%

%

\twocolumn[

\aistatstitle{Statistical Analysis of Karcher Means for Random Restricted PSD Matrices}

\aistatsauthor{ Hengchao Chen \And Xiang Li \And  Qiang Sun }

\aistatsaddress{ Department of Statistical Sciences\\ University of Toronto \And  School of Mathematical Sciences\\Peking University \And Department of Statistical Sciences\\ University of Toronto } ]

\begin{abstract}
	 Non-asymptotic statistical analysis is often missing for modern geometry-aware machine learning algorithms due to the possibly intricate non-linear manifold structure.
	 This paper studies an intrinsic mean model on the manifold of restricted positive semi-definite matrices and provides a non-asymptotic statistical analysis of the Karcher mean. We also consider a general extrinsic signal-plus-noise model, under which a deterministic error bound of the Karcher mean is provided. As an application, we show that the distributed principal component analysis algorithm, LRC-dPCA, achieves the same performance as the full sample PCA algorithm. Numerical experiments lend strong support to our theories.
\end{abstract}

\section{Introduction}

Positive semi-definite (PSD) matrices arise in a wide range of applications, such as covariance matrices in statistics \citep{Wainwright19}, kernel matrices in machine learning \citep{hastie2009elements}, diffusion tensor images in medical imaging \citep{dryden2009non}, semi-definite programming \citep{journee2010low}, and covariance descriptors in image set classification \citep{wang2012covariance}, to name a few. 
From a geometric perspective, the cone of PSD matrices is not a vector space, since linear combinations of multiple PSD matrices are not necessarily PSD matrices.  
Instead, the set of (restricted) PSD matrices of fixed rank has been endowed with different metrics such that it forms a Riemannian manifold \citep{Bonnabel10,Vandereycken13,Massart20,Neuman21}. By utilizing the geometric structures, researchers have developed many powerful statistical or computational methods \citep{faraki2016image,cornea2017regression,patrangenaru2016nonparametric}.

One important concept in Riemannian geometry or more generally metric spaces is the Karcher mean \citep{Karcher77}. The Karcher mean is often referred to as the Fr$\rm \acute{e}$chet mean or the barycenter of mass. Given $M$ points $\{\bz_m\}_{m=1}^M$ on a metric space $(\cM,d)$ with distance function $d(\cdot,\cdot)$, the Karcher mean $\tilde\bz$ of these points is given by
\#
\tilde\bz=\argmin_{\bz\in\cM}\sum_md^2(\bz,\bz_m).
\#
When the underlying space is Euclidean, the Karcher mean is reducesd to the arithmetic mean.
In general, the existence and computation of the Karcher mean is already complicated due to the possibly intricate non-Euclidean structure~\citep{Karcher77,bini2013computing}.
As a result, most works focus on the computation and applications of the Karcher mean, while few provide statistical guarantees. 
Statistically,  \cite{bhattacharya2003large,bhattacharya2005large} establish a large sample theory of the Karcher mean on manifolds with applications to spheres and projective spaces. 
\cite{bigot2013minimax} shows the minimax optimality of the Karcher mean of discretely sampled curves. 
In this paper, we consider the manifold of restricted PSD matrices by \cite{Neuman21}. In particular, we first study an intrinsic mean model, inspired by the geometric structure of the restricted PSD manifold. A non-asymptotic statistical analysis of the Karcher mean is provided under this intrinsic model. We further consider a general extrinsic signal-plus-noise model, which does not necessarily coincide with the manifold geometry by \cite{Neuman21}. For this general model, we give a deterministic error bound for the Karcher mean, which is then used to provide an error bound for a distributed principal component analysis algorithm.

The Karcher mean is closely related to distributed learning problems, especially the divide-and-conquer (DC) framework~\citep{Mackey11}. In distributed learning problems, massive datasets are scattered across distant servers and directly fusing these datasets is challenging due to concerns on communication cost, privacy, data security, and ownership, among others. A commonly used distributed framework is the DC framework which first computes local estimators locally and then aggregate them on the central server, where the last step is often equivalent to computing the Karcher mean on certain manifolds. For example, the divide-and-conquer principal component analysis (PCA) algorithms \citep{Fan19,Bhaskara19,Neuman21} essentially compute the Karcher means on the Grasssmann manifold, Euclidean space, or the manifold of restricted PSD matrices, respectively. 
Motivated by this observation, we give theoretical guarantees of the DC PCA algorithm, LRC-dPCA, proposed in \cite{Neuman21} by applying our non-asymptotic statistical analysis of the Karcher mean on the restricted PSD manifold.
Specifically, we show that given sufficienly large local sample size, LRC-dPCA achieves the same performance as the full sample PCA algorithm, which outputs the top eigenvectors of the covariance matrix based on full data.

Our contributions are three-fold. First, we provide a non-asymptotic statistical analysis of the Karcher mean on the restricted PSD manifold under an intrinsic model. Second, for a generic signal-plus-noise model, we give a deterministic characterization of the Karcher mean and then obtain a deterministic error bound. Third, as an application, we show that LRC-dPCA and full sample PCA share the same performance given sufficiently large local sample size. Numerical experiments are carried out to support our theories.

The rest of this paper proceeds as follows. We conclude this section with a discussion on related works. Section~\ref{sec:2} reviews the geometry for restricted PSD matrices proposed in \cite{Neuman21}. Then in Section~\ref{sec:3}, we provide the theoretical analysis of the Karcher mean on the restricted PSD manifolds. Applications to distributed PCA algorithms are given in Section~\ref{sec:4}. Numerical experiments are carried out in Section~\ref{sec:5} and we give concluding remarks in Section~\ref{sec:6}. Proofs are left to the Appendix.

\subsection{Related work}
{\bf Manifolds of PSD matrices} The cone of symmetric positive definite (SPD) matrices is not a vector space. It can be viewed as different Riemannian manifolds when endowed with different metrics, such as the affine-invariant metric~\citep{Moakher05} and the Log-Euclidean metric~\citep{Arsigny07}. It is, however, non-trivial to generalize these metrics to the rank-deficient (PSD) case. To this end, \citet{Bonnabel10} treated a PSD matrix of rank $K$ in a quotient space as a $K$-dimensional subspace coupled with a $K$-by-$K$ SPD matrix and then endowed the manifold of PSD matrices with a weighted product metric. Using this geometry, \citet{Bonnabel13} developed a rank-preserving geometric mean of PSD matrices. Later, \citet{Vandereycken13} viewed a PSD manifold as a homogeneous space and \citet{Massart20} analyzed a quotient geometry on the manifold of PSD matrices. However, it is hard to give a statistical model on these manifolds. More recently, \citet{Neuman21} proposed a geometry for restricted PSD matrices which has closed-form solutions for many geometric concepts including the Karcher mean. Our paper provides statistical analysis of the Karcher mean corresponding to this geometry.

{\bf Distributed PCA} 
To estimate the leading eigenvector, \citet{Garber17} proposed a sign-fixing averaging approach. To estimate the top $K$ eigenspace, \citet{Fan19} proposed a projector averaging approach and \citet{Charisopoulos21} proposed to average local eigenvector matrices after carefully rotating them. Disregarding the information of eigenvalues, both \citet{Fan19}'s and \citet{Charisopoulos21}'s methods require the knowledge of the precise location $K$ of a large eigen gap. To alleviate this issue, \citet{Bhaskara19} proposed to average the local rank-$K$ approximation matrices and then conduct PCA on the aggregated matrix. \citet{Neuman21} utilized the same methods as \citet{Bhaskara19} except that the average of local rank-$K$ approximation matrices is taken on the manifold of restricted PSD matrices. \citet{Neuman21} did not provide statistical analysis for their proposed method, while our paper fixes this gap as an application of the main results. Another branch of research turns PCA into the problem of solving a linear system and then solves distributed PCA by some multi-round algorithms. Among them, some make use of the shift-and-invert framework~\citep{Garber17,Chen21}, while some use incremental update schemes~\citep{gang2019fast,grammenos2020federated,li2021communication}.


{\bf Notation.}
By convention, we use regular letters for scalars and bold letters for both vectors and matrices. 
Given a vector $\bu\in\RR^p$, denote by $\norm{\bu}_2$ its $\ell_2$ norm. Given a matrix $\bA\in\RR^{n\times p}$, we use $\norm{\bA}_{\rF}$, $\norm{\bA}_2$ and $\norm{\bA}_{\max}=\max_{i,j}|\bA_{ij}|$ to denote its Frobenius norm, $\ell_2$ norm and max norm, respectively. We use $\textnormal{span}(\bA)$ to represent the subspace spanned by the columns of $\bA$. For a symmetric matrix $\bA$, denote by $\lambda_j(\bA)$ its $j$th largest eigenvalue. For two sequences of real numbers $\cbr{a_n}_{n\geq 1}$ and $\cbr{b_n}_{n\geq 1}$, we write $a_n \lesssim b_n$ (or $a_n \gtrsim b_n$) if $a_n \leq C b_n$ (or $a_n \geq C b_n$) for some constant $C>0$ independent of $n$. For an infinitesimal number $\epsilon$, we denote a matrix whose Frobenius norm or max norm is $\cO(\epsilon)$ (i.e., $\lesssim\epsilon$) by $\cO_{\rF}(\epsilon)$ or $\cO_{\max}(\epsilon)$, respectively. Given a random variable $x\in\RR$, we define $\norm{x}_{\psi_2}=\sup_{p\geq 1}(\EE |x|^p)^{1/p}/\sqrt{p}$ and $\norm{x}_{\psi_1}=\sup_{p\geq 1}(\EE |x|^p)^{1/p}/p$. 
Given two integers $p\geq K>0$, we denote by $\cO_{p\times K}$ the set of matrices in $\RR^{p\times K}$ whose columns are orthonormal. Denote by $S(p,K)$ the set of all $p\times p$ PSD matrices of rank $K$. Denote by $a\vee b=\max\{a,b\}$.

\section{The Manifold of Restricted PSD Matrices}\label{sec:2}

In this section, we briefly recap the geometry for restricted PSD matrices \citep{Neuman21}. To start with, any PSD matrix $\bA\in S(p,K)$ has a unique Cholesky decomposition $\bA=\bL\bL^\top$ such that $\bL\in\RR^{p\times p}$ is a lower triangular matrix and has precisely $K$ positive diagonal elements and $p-K$ zero columns. The $j$th column of $\bL$ is zero if and only if the $j$th column of $\bA$ is linearly dependent on the previous $j-1$ columns of $\bA$. Thus, we can rewrite $\bA=\bN\bN^\top$, where $\bN\in\RR^{p\times K}$ consists of $K$ non-zero columns of $\bL$ without changing the order. Note that $\bN$ is mock lower triangular, i.e., $\bN_{ij}=0$ if $i<j$. We refer to $\bN$ as the {\it reduced Cholesky factor} of $\bA$. To further develop a geometric structure, \citet{Neuman21} consider the restricted subset $S^*(p,K)$ of $S(p,K)$ such that the first $K$ columns of $\bA\in S^*(p,K)$ are linearly independent.
The set of all reduced Cholesky factors of matrices in $S^*(p,K)$ is denoted by $\cL^*(p,K)$, which is equivalent to the set of all mock lower triangular matrices in $\RR^{p\times K}$ with positive diagonal elements.  \citet{Neuman21} impose a Riemannian structure on $S^*(p,K)$ and $\cL^*(p,K)$ such that the following mappings are isometric,
\#
&\frak{h}: S^*(p,K)\mapsto \cL^*(p,K), \, \bA\mapsto \bN,\label{equ:2.1}\\
&\frak{g}: \cL^*(p,K)\mapsto \cL(p,K), \, \bN\mapsto \bN',\label{equ:2.2}
\#
where $\bN=\frak{h}(\bA)$ is the reduced Cholesky factor of $\bA$, $\cL(p,K)=\{\bN'\in\RR^{p\times K}:\bN'_{ij}=0,i<j\}$ is endowed with a Euclidean structure, and $\bN'=\frak{g}(\bN)\in\cL(p,K)$ is defined by $\bN'_{ii}=\log(\bN_{ii}),\forall\, i$ and $\bN'_{ij}=\bN_{ij},\forall\, i>j$. We refer to $\bN'=\mathfrak{g}\circ\mathfrak{h}(\bA)$ as the reduced log-Cholesky factor of $\bA$. 
	The Karcher mean $\tilde\bA$ of $M$ restricted PSD matrices $\{\bA^m\}_{m=1}^M\subset S^*(p,K)$ has a closed-form solution, which is given by
	\#\label{equ:2.3}
	\tilde\bA=\frak{h}^{-1}\circ\frak{g}^{-1}(\frac{1}{M}\sum_{m=1}^M\frak{g}\circ\frak{h}(\bA^m)).
	\#
	The algorithm computing $\tilde\bA$ is referred to as the Low Rank Cholesky (LRC) algorithm \citep{Neuman21}.

\section{Statistical Analysis of the Karcher Mean}\label{sec:3}
In this section, we provide the first statistical analysis of the Karcher mean under an intrinsic model on the restricted PSD manifold. Then we consider a general signal-plus-noise model under which a deterministic error bound of the Karcher mean is given.

\subsection{An intrinsic model}\label{sec:3.1}

Inspired by the isometry stated in equations \eqref{equ:2.1} and \eqref{equ:2.2} between the manifold $S^*(p,K)$ of restricted PSD matrices and the Euclidean space $\cL(p,K)$, we propose the following intrinsic model. Suppose $\bA\in S^*(p,K)$ is the signal matrix and denote by $\bN'=\mathfrak{g}\circ\mathfrak{h}(\bA)$ its reduced log-Cholesky factor. The observations $\{\bA^m\}_{m=1}^M$ are generated as follows:
\#
\bA^m=\mathfrak{h}^{-1}\circ\mathfrak{g}^{-1}(\bN'+\bE^m),\quad  m=1,\ldots,M,\label{equ:3.1}
\#
where $\{\bE^m\}_{m=1}^M\subset \cL(p,K)$ are independent and the lower triangular entries of $\bE^m$ are independent normal variables with mean zero and variance $\sigma^2$.  
Under this intrinsic model, the Karcher mean  of $\{\bA^m\}_{m=1}^M$ can be rewritten as
\#
\tilde \bA=\mathfrak{h}^{-1}\circ\mathfrak{g}^{-1}(\bN'+\frac{1}{M}\sum_{m=1}^M\bE^m).\label{equ:3.2}
\#
Using measure concentration, we can obtain a non-asymptotic error bound for the Karcher mean $\tilde \bA$.

\begin{theorem}[Intrinsic Model]\label{thm:3.1}
    {\it Suppose $\bA\in S^*(p,K)$ is the signal matrix and assume $\norm{\bA}_2\leq C$ for some constant $C>0$. Assume samples $\{\bA^m\}_{m=1}^M$ are generated from the intrinsic model \eqref{equ:3.1} and denote by $\tilde\bA$  the Karcher mean of $\{\bA^m\}_{m=1}^M$. Then there exist some constants $c_1,c_2>0$ such that the following inequality
    \#
    \norm{\tilde\bA - \bA}_{\rm F}\leq \sqrt{\frac{c_2pK\sigma^2}{M}}\label{equ:3.3}
    \#
    holds with probability at least $1-e^{-c_1pK}$.}
\end{theorem}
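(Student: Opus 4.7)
The plan is to reduce the problem to analyzing a small Gaussian perturbation in the flat log-Cholesky coordinates and then to transfer the bound back through $\mathfrak{g}^{-1}$ and $\mathfrak{h}^{-1}$. Let $\bN=\mathfrak{h}(\bA)$ be the reduced Cholesky factor of $\bA$ and set $\bar\bE=\frac{1}{M}\sum_{m=1}^M\bE^m$, whose mock lower triangular entries are independent $\mathcal{N}(0,\sigma^2/M)$. By \eqref{equ:3.2}, $\tilde\bA = \mathfrak{h}^{-1}\circ\mathfrak{g}^{-1}(\bN'+\bar\bE)$. Since $\cL(p,K)$ has dimension $d=Kp-K(K-1)/2=\Theta(pK)$, a standard $\chi^2$ concentration inequality (Laurent--Massart) yields, on an event $\Omega$ of probability at least $1-e^{-c_1 pK}$,
\[
\norm{\bar\bE}_{\rm F}\leq C\sigma\sqrt{pK/M}.
\]

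Next I would analyze $\tilde\bN := \mathfrak{g}^{-1}(\bN'+\bar\bE)$. By the definition of $\mathfrak{g}^{-1}$, $\tilde\bN_{ii}=\bN_{ii}\exp(\bar\bE_{ii})$ and $\tilde\bN_{ij}=\bN_{ij}+\bar\bE_{ij}$ for $i>j$. The hypothesis $\norm{\bA}_2\leq C$ yields $\norm{\bN}_2\leq\sqrt{C}$, so each diagonal $\bN_{ii}$ is bounded. On $\Omega$, every $|\bar\bE_{ii}|\leq\norm{\bar\bE}_{\rm F}$ is small in the meaningful regime $\sigma^2 pK/M=O(1)$ (outside which the target bound is vacuous), so a first-order Taylor estimate for $\exp$ gives $|\tilde\bN_{ii}-\bN_{ii}|\lesssim|\bar\bE_{ii}|$. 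Summing the diagonal and off-diagonal contributions,
\[
\norm{\tilde\bN-\bN}_{\rm F}\lesssim\norm{\bar\bE}_{\rm F}\lesssim\sigma\sqrt{pK/M}.
\]

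Finally I transfer the bound to $\tilde\bA-\bA$ through the factorization $\bA=\bN\bN^\top$. Writing
\[
\tilde\bA-\bA=(\tilde\bN-\bN)\tilde\bN^\top+\bN(\tilde\bN-\bN)^\top
\]
and using $\norm{\bN}_2\leq\sqrt{C}$ together with $\norm{\tilde\bN}_2\leq\norm{\bN}_2+\norm{\tilde\bN-\bN}_{\rm F}\lesssim\sqrt{C}$ on $\Omega$, one obtains
\[
\norm{\tilde\bA-\bA}_{\rm F}\leq(\norm{\tilde\bN}_2+\norm{\bN}_2)\norm{\tilde\bN-\bN}_{\rm F}\lesssim\sigma\sqrt{pK/M},
\]
which is \eqref{equ:3.3} after adjusting constants.

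The main obstacle is the nonlinearity of $\mathfrak{g}^{-1}$ on the diagonal: the exponential map has no global Lipschitz constant, so a purely Frobenius bound on $\bar\bE$ is not sufficient by itself. One must argue separately that each diagonal perturbation $\bar\bE_{ii}$ lies in the linear regime of $\exp$; this follows from the entrywise inequality $|\bar\bE_{ii}|\leq\norm{\bar\bE}_{\rm F}$ together with the implicit smallness condition $\sigma^2 pK/M=O(1)$ under which the theorem is informative. Everything else is then a clean chain of Lipschitz estimates that propagate the $\sigma\sqrt{pK/M}$ rate from $\bar\bE$ to $\tilde\bN$ to $\tilde\bA$.
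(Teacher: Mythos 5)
Your overall route is the same as the paper's: $\chi^2$ concentration on $\bar\bE=\frac{1}{M}\sum_m\bE^m$ in the log-Cholesky coordinates, linearization of the exponential on the diagonal, and a Lipschitz transfer through $\bN\mapsto\bN\bN^\top$ using $\norm{\bN}_2=\norm{\bA}_2^{1/2}\leq C^{1/2}$.

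The one step that does not hold up as written is your control of the diagonal entries. You bound $|\bar\bE_{ii}|\leq\norm{\bar\bE}_{\rF}\lesssim\sigma\sqrt{pK/M}$ and justify linearizing $\exp$ by asserting that outside the regime $\sigma^2pK/M=O(1)$ the theorem is vacuous. It is not: since $\tilde\bN_{ii}=\bN_{ii}\exp(\bar\bE_{ii})$ can blow up exponentially, the claimed bound $\sqrt{c_2pK\sigma^2/M}$ remains a nontrivial assertion even when $pK\sigma^2/M$ is a large constant, and in the regime $\sigma^2/M\ll 1\ll\sigma^2pK/M$ (e.g., $p$ large with $\sigma,M$ fixed) the conclusion is true but your argument does not reach it. The fix is exactly what the paper does: each $\bar\bE_{ii}$ is a single $\cN(0,\sigma^2/M)$ variable, so a union bound over the $K$ diagonal entries gives $\max_i|\bar\bE_{ii}|^2\leq\frac{\sigma^2}{M}(1+t)$ with probability at least $1-Ke^{-t^2/8}$ --- smaller than your bound by a factor of order $\sqrt{pK}$ --- after which $|e^x-1|\leq 2|x|$ for $x\leq 1/2$ applies and the rest of your chain of estimates goes through verbatim. (Both proofs still implicitly need $\sigma^2/M$ bounded so that the exponential is in its linear regime, but that is far weaker than $\sigma^2pK/M=O(1)$.)
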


\begin{remark}
    {\it It is worth noting that \eqref{equ:3.3} achieves the optimal rate $M^{-1/2}$. In addition, it only depends on the intrinsic dimension $\cO(pK)$ of the manifold, which can be much smaller than the ambient dimension $p^2$.}
\end{remark}

\subsection{A general signal-plus-noise model}\label{sec:3.2}

The intrinsic model may be too restricted, so this subsection introduces a general signal-plus-noise model and then provides a deterministic characterization of the Karcher mean. An application of this deterministic error bound to the distributed PCA problem will be given in Section \ref{sec:4}. Similar to the intrinsic model, we denote by $\bA\in S^*(p,K)$ the signal matrix and $\bN=\mathfrak{h}(\bA)$ its reduced Cholesky factor. The observations $\{\bA^m\}_{m=1}^M\subset S^*(p,K)$ are given by
\#
\bA^m=(\bN+\bE^m)(\bN+\bE^m)^\top,\label{equ:3.4}
\#
where $\bE^m\in\mathbb{R}^{p\times K}$ represents the $m$-th noise matrix. Here $\bE^m$ is not necessarily a mock lower triangular matrix, so the model is quite general. Also, the reduced Cholesky factor of $\bA^m$ is not necessarily $\bN+\bE^m$, but rather $(\bN+\bE^m)\bQ^m$ for some orthogonal matrix $\bQ^m\in\cO_{K\times K}$. Denote by $\bN^m$ the reduced Cholesky factor of $\bA^m$.

To characterize the Karcher mean \eqref{equ:2.3} of $\{\bA^m\}_{m=1}^M$, we first establish a linear perturbation expansion of QR decomposition below.

\begin{lemma}[Linear Perturbation Expansion]\label{lma:3.2}
		{\it 
			Suppose $\bQ\in\cO_{K\times K}$ and $\bR\in\RR^{K\times K}$ is a lower triangular matrix with positive diagonal elements. Given a noise matrix $\bE\in\RR^{K\times K}$, there exist a unique orthogonal matrix $\widecheck{\bQ}\in\cO_{K\times K}$ and a lower triangular matrix $\widecheck{\bR}\in\RR^{K\times K}$ with non-negative diagonal elements such that $\widecheck{\bR}\widecheck\bQ=\bR\bQ+\bE$. When $\epsilon_0=\norm{\bE}_{\max}$ is sufficiently small, we have
		\$
		\widecheck\bQ&=\bQ+ f_{\bR}(\bE\bQ^\top)\bQ+\cO_{\max}(\epsilon_0^2),\\
		\widecheck\bR&=\bR+\bE\bQ^\top-\bR f_{\bR}(\bE\bQ^\top)+\cO_{\max}(\epsilon_0^2),
		\$
		where $f_{\bR}:\RR^{K\times K}\mapsto \RR^{K\times K}$ is given by
		\$
		&f_{\bR}(\bE)=\cU(\bR^{-1}\bE)-(\cU(\bR^{-1}\bE))^\top,\\
		&\cU(\bP)_{ij}=\bP_{ij},i<j,\quad \cU(\bP)_{ij}=0,\textnormal{otherwise}.
		\$
	}
	\end{lemma}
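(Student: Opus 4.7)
My plan is to reduce to the case $\bQ = \bI$, linearize the constraint $\widecheck{\bR}\widecheck{\bQ} = \bR\bQ + \bE$ in $\bE$, and identify the first-order correction from the orthogonality and triangularity requirements. Right-multiplying by $\bQ^\top$ and setting $\tilde\bE = \bE\bQ^\top$ and $\bS = \widecheck{\bQ}\bQ^\top \in \cO_{K\times K}$ gives $\widecheck{\bR}\bS = \bR + \tilde\bE$. For $\epsilon_0$ small enough, $\bR + \tilde\bE$ is invertible and close to $\bR$, so the LQ-type decomposition yields a unique $(\widecheck{\bR}, \bS)$ with $\widecheck{\bR}$ having positive (in particular non-negative) diagonal.

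I would then write $\bS = \bI + \bX + \cO_{\max}(\epsilon_0^2)$ and $\widecheck{\bR} = \bR + \bY + \cO_{\max}(\epsilon_0^2)$. The constraint $\bS\bS^\top = \bI$ forces $\bX + \bX^\top = \cO_{\max}(\epsilon_0^2)$, so $\bX$ may be taken skew-symmetric to leading order; triangularity of $\widecheck{\bR}$ forces $\bY$ to be lower triangular. Matching first-order terms in $\widecheck{\bR}\bS = \bR + \tilde\bE$ produces
\[
\bY + \bR\bX = \tilde\bE.
\]
Equating strictly upper triangular parts (where $\bY$ vanishes) gives $(\bR\bX)_{ij} = \tilde\bE_{ij}$ for $i<j$; together with the skew-symmetry of $\bX$, this determines $\bX$ uniquely.

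The main calculation is to verify that $\bX = f_{\bR}(\tilde\bE)$ fulfills these conditions. Decomposing $\bR^{-1}\tilde\bE$ into its strict upper part $\cU(\bR^{-1}\tilde\bE)$ and the remainder (a lower triangular matrix), and multiplying by $\bR$ on the left, shows that the strict upper part of $\bR\,\cU(\bR^{-1}\tilde\bE)$ coincides with the strict upper part of $\tilde\bE$, since the subtracted piece is lower times lower, hence lower. Meanwhile $\bR\,(\cU(\bR^{-1}\tilde\bE))^\top$ is lower times strictly lower, hence strictly lower, contributing nothing to the strict upper part. Skew-symmetry of $f_{\bR}$ is immediate. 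Hence the claimed formula holds, so $\bY = \tilde\bE - \bR f_{\bR}(\tilde\bE)$, and substituting back via $\widecheck{\bQ} = \bS\bQ$ together with $\tilde\bE = \bE\bQ^\top$ yields the stated expansions.

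The main obstacle is making the $\cO_{\max}(\epsilon_0^2)$ remainders rigorous rather than formal. The cleanest route is the implicit function theorem applied to the map $(\widecheck{\bR}, \bS) \mapsto \widecheck{\bR}\bS - \bR$ on (lower triangular matrices with positive diagonal)$\,\times\,\cO_{K\times K}$: its differential at $(\bR, \bI)$ is precisely the linear map $(\bY, \bX) \mapsto \bY + \bR\bX$ inverted in the previous step. Injectivity there follows from a short row-by-row induction exploiting the positivity of the diagonal of $\bR$, so by dimension count the differential is an isomorphism. Smoothness of the resulting local inverse yields a quadratic Taylor remainder in the max norm.
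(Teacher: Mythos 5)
Your proposal is correct, and the algebraic core --- reducing to $\bQ=\bI_K$ by right-multiplying by $\bQ^\top$, then identifying the skew-symmetric first-order correction by multiplying through by $\bR^{-1}$, extracting strictly upper-triangular parts, and invoking skew-symmetry --- coincides with Steps~2 and~3 of the paper's proof. Where you genuinely diverge is in justifying the a priori expansion $\widecheck\bQ=\bI_K+\bX+\cO_{\max}(\epsilon_0^2)$ with $\bX$ skew and $\cO(\epsilon_0)$: the paper constructs $\widecheck\bQ$ explicitly as an ordered product of $K(K-1)/2$ Givens rotations and shows inductively that every rotation angle is $\cO(\epsilon_0)$, whereas you invoke the inverse/implicit function theorem for the map $(\widecheck\bR,\bS)\mapsto\widecheck\bR\bS$ on the product of the lower-triangular cone with $\cO_{K\times K}$, whose differential $(\bY,\bX)\mapsto\bY+\bR\bX$ you correctly show is an isomorphism by the dimension count $K(K+1)/2+K(K-1)/2=K^2$ plus the observation that a lower-triangular skew-symmetric matrix vanishes. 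Your route is shorter and cleaner, and it automatically delivers uniqueness and smooth dependence (hence a quadratic Taylor remainder, which controls the max norm by finite-dimensional norm equivalence with $K$-dependent constants --- consistent with the paper, whose Step~3 remainder is $\cO_{\max}(K^{3/2}\epsilon_0^2)$); one small point to make explicit is that the IFT solution, being a lower-triangular-times-orthogonal factorization with positive diagonal, must coincide with the unique LQ factors of $\bR+\bE\bQ^\top$. What the paper's explicit rotation construction buys in exchange for its length is a concrete, elementary handle on the remainder whose constants could in principle be tracked in terms of $\bR$ and $K$, whereas your remainder constant is hidden in the second derivative of the local inverse.
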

	
It is worth emphasizing the following properties of $f_{\bR}$. First, $f_{\bR}$ is linear in its argument, i.e., $f_{\bR}(a\bE+b\bF)=af_{\bR}(\bE)+bf_{\bR}(\bF)$ for any  $a,b\in\RR$ and $\bE,\bF\in\RR^{K\times K}$. Second, $f_{\bR}(\bE)$ is a skew-symmetric matrix, i.e., $(f_{\bR}(\bE))^\top=-f_{\bR}(\bE)$. Last, $f_{\bR}$ is bounded in the sense that $\norm{f_{\bR}(\cdot)}_{\rF}\leq\sqrt{2}\norm{\bR^{-1}}_2\norm{\cdot}_{\rF}$. 
Similar first-order perturbation theories exist in the literature for QR, Cholesky, and LU factorization \citep{Chang96,Stewart97,Stewart77,chang1997perturbation}, but none of them provides a linear perturbation expansion with a max-norm control on the remainder term, which is necessary for our development of the error bound on the Karcher mean and the subsequent applications in distributed PCA. 

Now we are ready to present a deterministic characterization of the Karcher mean. For convenience, we write $\bN=(\bR^\top\ \bB^\top)^\top$ and $\bE^{m}=(\bE^{1,m^\top}\ \bE^{2,m^\top})^\top$ such that $\bR,\bE^{1,m}\in\RR^{K\times K}$ and $\bB,\bE^{2,m}\in\RR^{(p-K)\times K}$. Note that $\bR$ is a lower triangular matrix with positive diagonal elements since $\bN\in\cL^*(p,K)$. In the following theorem, we will show that when $\epsilon_0=\max_m\norm{\bE^m}_{\max}$ is sufficiently small, the reduced Cholesky factor $\tilde\bN$ of $\tilde\bA$ differs from $\bN$ by a term linear in $\frac{1}{M}\sum_{m=1}^M\bE^m$ and an extra term of order $\cO_{\max}(\epsilon_0^2)$. Recall that $S^*(p,K)$ is the manifold of restricted PSD matrices.

\begin{theorem}[Karcher Mean on $S^*(p,K)$]\label{thm:3.3}
		\textit{When $\epsilon_0=\max_m\norm{\bE^m}_{\max}$ is sufficiently small, the reduced Cholesky factor $\tilde\bN$ of the Karcher mean $\tilde\bA$ of $\{\bA^m=(\bN+\bE^m)(\bN+\bE^m)^\top\}_{m=1}^M$ on $S^*(p,K)$ is 
		\$
		\tilde\bN&=\bN+\frac{1}{M}\sum_{m=1}^M\bE^m-\bN f_{\bR}(\frac{1}{M}\sum_{m=1}^M\bE^{1,m})\\&\quad+\cO_{\max}(\epsilon_0^2),
		\$
		where $f_{\bR}(\cdot)$ is given in Lemma \ref{lma:3.2}.}
	\end{theorem}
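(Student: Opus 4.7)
The strategy is to first compute the reduced Cholesky factor $\bN^m$ of each sample $\bA^m$ by a direct application of Lemma~\ref{lma:3.2}, then invoke the closed-form Karcher mean formula~\eqref{equ:2.3}, and finally verify that the only nonlinearity involved ($\log$/$\exp$ on the diagonal) contributes only at order $\cO(\epsilon_0^2)$.

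Step~1 (the factor $\bN^m$). Split $\bN+\bE^m$ into its top $K\times K$ block $\bR+\bE^{1,m}$ and its bottom $(p-K)\times K$ block $\bB+\bE^{2,m}$. Since $\bA^m=(\bN+\bE^m)(\bN+\bE^m)^\top$, its reduced Cholesky factor equals $\bN^m=(\bN+\bE^m)\bQ^m$, where $\bQ^m\in\cO_{K\times K}$ is the unique orthogonal matrix that makes $(\bR+\bE^{1,m})\bQ^m$ lower triangular with positive diagonal. Apply Lemma~\ref{lma:3.2} with base factors $\bQ=\bI$ and $\bR$ and perturbation $\bE=\bE^{1,m}$; together with the skew-symmetry of $f_{\bR}$ this gives $\bQ^m=\bI-f_{\bR}(\bE^{1,m})+\cO_{\max}(\epsilon_0^2)$. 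Multiplying out $(\bN+\bE^m)\bQ^m$ and absorbing the quadratic cross term $\bE^m f_{\bR}(\bE^{1,m})$ into the remainder yields $\bN^m=\bN+\bE^m-\bN f_{\bR}(\bE^{1,m})+\cO_{\max}(\epsilon_0^2)$.

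Step~2 (assembling the average). By~\eqref{equ:2.3}, $\tilde\bN=\mathfrak{g}^{-1}\bigl(\frac{1}{M}\sum_{m=1}^M\mathfrak{g}(\bN^m)\bigr)$. Off the diagonal the map $\mathfrak{g}$ is the identity, so for $i>j$ one has $\tilde\bN_{ij}=\frac{1}{M}\sum_m\bN^m_{ij}$, and the claimed expression for these entries follows immediately from the linearity of $f_{\bR}$. For the diagonal entries $i\leq K$, write $\delta^m_i=\bN^m_{ii}-\bN_{ii}=\cO(\epsilon_0)$; expanding $\log(1+x)=x+\cO(x^2)$, averaging, and then applying $\exp(x)=1+x+\cO(x^2)$ gives $\tilde\bN_{ii}=\bN_{ii}+\frac{1}{M}\sum_m\delta^m_i+\cO(\epsilon_0^2)$, so the diagonal also aggregates like arithmetic averaging up to a second-order remainder. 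Combining both pieces and using the linearity of $f_{\bR}$ to pull the sum inside delivers the stated expansion of $\tilde\bN$.

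Main obstacle. The heavy lifting is bookkeeping the second-order remainders in the max norm rather than the Frobenius norm: Lemma~\ref{lma:3.2} is indispensable precisely because it supplies a max-norm control on the quadratic terms, which is what allows us to absorb both $\bE^m f_{\bR}(\bE^{1,m})$ and the $\log$/$\exp$ remainder $\cO((\delta^m_i)^2)$ into a single $\cO_{\max}(\epsilon_0^2)$ term without picking up dimension factors in $p$ or $K$ (which will matter in Section~\ref{sec:4}). The remaining standing conditions---that $\bR$ be invertible so $f_{\bR}$ is defined, and that $\min_{i\leq K}\bN_{ii}$ be bounded away from zero so $\log\bN^m_{ii}$ makes sense---are both ensured by $\bA\in S^*(p,K)$ once $\epsilon_0$ is small enough that every $\bN^m$ still lies in $\cL^*(p,K)$.
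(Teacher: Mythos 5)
Your proposal is correct and follows essentially the same route as the paper's proof: apply Lemma~\ref{lma:3.2} to the top $K\times K$ block to get $\bN^m=\bN+\bE^m-\bN f_{\bR}(\bE^{1,m})+\cO_{\max}(\epsilon_0^2)$, then use the closed-form Karcher mean \eqref{equ:2.3}, noting that the only nonlinearity is the geometric mean on the diagonal, which agrees with the arithmetic mean up to $\cO(\epsilon_0^2)$, and finish by linearity of $f_{\bR}$. Your diagonal $\log$/$\exp$ Taylor expansion is just a slightly more explicit version of the paper's argument.
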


From Theorem \ref{thm:3.3}, one may easily derive a deterministic upper bound on $\norm{\tilde N-N}_{\rF}$ using the triangular inequality, which depends on $\norm{\frac{1}{M}\sum_{m=1}^M\bE^m}_{\rF}$ and $pK\epsilon_0^2$.

\begin{corollary}\label{corollary:3.4}
    {\it Under the same conditions of Theorem~\ref{thm:3.3}, if $\norm{\bN}_2\leq C$ and $\norm{\bR^{-1}}\leq C$ for some constant $C>0$, then we have
    \$
    \norm{\tilde\bN-\bN}_{F}\leq\cO(\norm{\frac{1}{M}\sum_{m=1}^M\bE^m}_{\rF})+\cO(pK\epsilon_0^2).
    \$}
\end{corollary}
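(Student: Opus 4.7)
The plan is to start from the linear perturbation expansion of $\tilde\bN$ provided by Theorem~\ref{thm:3.3}, subtract $\bN$ from both sides, and then apply the triangle inequality in Frobenius norm:
\[
\|\tilde\bN-\bN\|_{\rF} \;\le\; \Bigl\|\tfrac{1}{M}\!\textstyle\sum_{m}\bE^m\Bigr\|_{\rF} + \Bigl\|\bN\, f_{\bR}\bigl(\tfrac{1}{M}\!\textstyle\sum_{m}\bE^{1,m}\bigr)\Bigr\|_{\rF} + \bigl\|\cO_{\max}(\epsilon_0^2)\bigr\|_{\rF}.
\]
The first term on the right-hand side already appears in the target bound, so the work is entirely in controlling the other two pieces.

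Next, I would bound the middle term using the three properties of $f_{\bR}$ highlighted right after Lemma~\ref{lma:3.2}. Submultiplicativity of $\|\cdot\|_{\rF}$ under left-multiplication by $\bN$ gives $\|\bN f_{\bR}(\bG)\|_{\rF}\le \|\bN\|_2\,\|f_{\bR}(\bG)\|_{\rF}$, and the stated bound $\|f_{\bR}(\cdot)\|_{\rF}\le \sqrt{2}\,\|\bR^{-1}\|_2\|\cdot\|_{\rF}$ together with the hypotheses $\|\bN\|_2\le C$ and $\|\bR^{-1}\|_2\le C$ yields a $\sqrt{2}\,C^{2}$ prefactor. Finally, the inclusion $\bE^{1,m}\hookrightarrow \bE^{m}$ as the top $K\times K$ block gives $\|\tfrac{1}{M}\sum_m \bE^{1,m}\|_{\rF}\le \|\tfrac{1}{M}\sum_m \bE^m\|_{\rF}$, so this term is absorbed into the first $\cO(\cdot)$ in the claim.

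For the remainder term, I would use the fact that $\cO_{\max}(\epsilon_0^2)$ denotes a $p\times K$ matrix whose entries are bounded in absolute value by a constant multiple of $\epsilon_0^2$; hence its Frobenius norm is at most $\sqrt{pK}\cdot \cO(\epsilon_0^2)=\cO(\sqrt{pK}\,\epsilon_0^2)\le \cO(pK\,\epsilon_0^2)$, matching the (looser) rate stated in the corollary. Combining the three pieces gives exactly the bound
\[
\|\tilde\bN-\bN\|_{\rF} \;\le\; \cO\Bigl(\bigl\|\tfrac{1}{M}\textstyle\sum_m\bE^m\bigr\|_{\rF}\Bigr) + \cO(pK\,\epsilon_0^2).
\]

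There is no real obstacle: every ingredient (triangle inequality, submultiplicativity, the norm bound on $f_{\bR}$, the trivial submatrix inequality, and the conversion from max-norm to Frobenius norm on a $p\times K$ matrix) is elementary once Theorem~\ref{thm:3.3} is in hand. The only modest subtlety is keeping track of the shape of the remainder so as to correctly convert its max-norm control to a Frobenius-norm control with the explicit $\sqrt{pK}$ (upgraded to $pK$) factor; all else is bookkeeping.
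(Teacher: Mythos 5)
Your proof is correct and follows essentially the same route the paper intends: the triangle inequality applied to the expansion in Theorem~\ref{thm:3.3}, the bound $\norm{f_{\bR}(\cdot)}_{\rF}\leq\sqrt{2}\norm{\bR^{-1}}_2\norm{\cdot}_{\rF}$ combined with the submatrix inequality for the middle term, and the $\sqrt{pK}$ conversion of the max-norm remainder to Frobenius norm (the same steps appear verbatim in the paper's proof of Theorem~\ref{thm:4.6}). Your observation that $\sqrt{pK}\,\epsilon_0^2$ is actually sharper than the stated $pK\epsilon_0^2$ is also accurate.
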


In applications such as distributed PCA, the Frobenius norm of the average $\frac{1}{M}\sum_{m=1}^M\bE^m$ is much smaller than that of $\bE^m$. Thus, 
by Corollary~\ref{corollary:3.4}, 
the Karcher mean $\tilde\bN$ is a better approximation of $\bN$ than any $\bN^m$ (the reduced Cholesky factor of $\bA^m$).


\section{Applications to Distributed PCA}\label{sec:4}

This section applies Theorem~\ref{thm:3.3} to show that the distributed PCA algorithm, LRC-dPCA proposed by \cite{Neuman21}, achieves the same performance as the full sample PCA when the local sample size is sufficiently large. 

\subsection{Distributed PCA and LRC-dPCA}\label{sec:4.1}
\begin{algorithm}[tb]
		\caption{LRC-dPCA}
		\label{alg:1}
		
		\begin{algorithmic}
		    \STATE {\bfseries Input:} $\{\hat\bSigma^m=\frac{1}{n}\sum_{i}\bx_i^m\bx_i^{m\top}\}_{m=1}^M$, $K$\;
			\STATE {\bfseries Output:} $\tilde\bV$\;
			\STATE Compute $\hat\bV^m$ and $\hat\bLambda^m$ of $\hat\bSigma^m$ and communicate them to a central server\;
			\STATE Compute the Karcher mean $\tilde\bA$ of $\hat\bV^m(\hat\bLambda^m)^2\hat\bV^{m\top}$ on the manifold of restricted PSD matrices\;
			\STATE Compute the top $K$ eigenspace $\tilde\bV$ of $\tilde \bA$.
		\end{algorithmic}
	\end{algorithm}

We start with the distributed PCA setting as well as the LRC-dPCA algorithm. For simplicity, we consider a balanced setting, in which we have $M$ machines and the $m$-th machine has $n$ samples $\{\bx_i^m\}_{i=1}^{n}\subset\RR^p$. Denote by $N=Mn$ the total number of samples. Assume all samples are $\textnormal{i.i.d.}$ sub-Gaussian with mean $\zero$ and covariance $\bSigma$. 
\begin{definition}[sub-Gaussian]
	\textit{We say a random vector $\bx\in\RR^p$ is sub-Gaussian with mean $\zero$ and covariance $\bSigma$ if $\bz=\bSigma^{-1/2}\bx$ is sub-Gaussian with mean $\zero$  and covariance $\bI_p$, i.e., there exists a constant $\sigma>0$ such that the following inequality holds,
	\$
	\EE[e^{\lambda \langle\bu,\bz\rangle}]\leq e^{\frac{\lambda^2\sigma^2}{2}},\quad \forall \lambda\in\RR,\forall\bu\in\RR^p,\norm{\bu}_2=1.
	\$}	
	\end{definition}
\begin{remark}
	\textit{ \citet{Fan19} and \citet{Bhaskara19} use the following \emph{equivalent} definition of a sub-Gaussian vector: $\bx\in\RR^d$ is sub-Gaussian with mean $\zero$ and covariance $\bSigma$ if there exists a constant $C>0$ such that $\norm{\bu^\top\bx}_{\psi_2}\leq C\sqrt{\EE(\bu^\top\bx)^2},\forall\bu\in\RR^d$. For more information on the equivalent definitions of sub-Gaussian vectors, one may refer to \citet{Vershynin12}.}
	\end{remark}
	
	Given a positive integer $K$, the goal is to compute the top $K$ eigenspace of $\bSigma$ using all data on $M$ machines with small communication cost. We consider the LRC-dPCA algorithm proposed by \cite{Neuman21}, collected in Algorithm \ref{alg:1}. Following this algorithm, we first compute $\hat\bSigma^m=\frac{1}{n}\sum_{i=1}^n\bx_i^m\bx_i^{m\top}$ on each local machine and then compute the top $K$ eigenvectors $\hat\bV^m=(\hat\bv_1^m,\ldots,\hat\bv_K^m)\in\cO_{p\times K}$ and eigenvalues $\hat\bLambda^m=\diag(\lambda_1^m,\ldots,\lambda^m)$ of $\hat\bSigma^m$. After communicating these local estimators $\hat\bV^m,\hat\bLambda^m$ to a central server, we compute the Karcher mean $\tilde\bA$ of $\{\hat\bV^m(\bLambda^m)^2\hat\bV^{m\top}\}_{m=1}^M$ on the manifold of restricted PSD matrices\footnote{Here we choose $(\hat\bLambda^m)^2$ rather than $\hat\bLambda^m$ only for technical reasons in the theoretical proofs.}. Finally, the top $K$ eigenvectors $\tilde\bV\in\cO_{p\times K}$ of $\tilde\bA$ is returned.

\subsection{Theoretical analysis}\label{sec:4.2}

A statistical analysis of the LRC-dPCA algorithm is missing in its original paper \citep{Neuman21}. In this subsection, we will utilize our deterministic characterization of the Karcher mean on $S^*(p,K)$, i.e., Theorem~\ref{thm:3.3}, to show that given sufficiently large sub-sample size, LRC-dPCA matches the performance of the full sample PCA. 
Denote by $\bV=(\bv_1,\ldots,\bv_K)\in\cO_{p\times K}$ and $\bLambda=\diag(\lambda_1,\ldots,\lambda_K)$ the top $K$ eigenvectors and eigenvalues of $\bSigma$, respectively. To ensure the uniqueness of $\textnormal{span}(\bV)$, we assume $\Delta_K=\lambda_K(\bSigma)-\lambda_{K+1}(\bSigma)>0$. Write $\bA=\bV\bLambda^2\bA^\top$ and assume the first $K$ columns of $\bA$ are linearly independent, i.e., $\bA\in S^*(p,K)$. When the subsample size $n$ is sufficiently large, we will show that $\hat\bA^m=\hat\bV^m(\hat\bLambda^m)^2\hat\bV^{m\top}$ also belongs to $S^*(p,K)$ with high probability. Here $\hat\bV^m$ and $\hat\bLambda^m$ denote the top $K$ eigenvectors and eigenvalues of $\hat\bSigma^m$ respectively. Denote by $\tilde\bA$ the Karcher mean of $\{\hat\bA^m\}_{m=1}^M$ on $S^*(p,K)$.  We further  denote by $\bN,\hat\bN^m,\tilde\bN$ the reduced Cholesky factors of $\bA,\hat\bA^m,\tilde\bA$. In addition, we define $\bQ^*\in\cO_{K\times K}$ by the equality $\bN=\bV\bLambda\bQ^*$.

In the rest of this subsection, we will apply Theorem~\ref{thm:3.3} to study the properties of $\tilde \bA$. First,  we show that $\{\hat\bA^m\}_{m=1}^M$ follow the general signal-plus-noise model \eqref{equ:3.4}.

\begin{lemma}\label{lma:4.3}
	     Let $\hat\bE^m=\hat\bSigma^m\hat\bV^m\hat\bH^m\bQ^*-\bSigma\bV\bQ^*$, where $\bH^m=\hat\bV^{m\top}\bV$ and $\hat\bH^m=\textnormal{sgn}(\bH^m)\overset{\rm def}{=}\bU_1\bU_2^\top$ with $\bU_1,\bU_2$ given by the singular value decomposition  $\bH^m=\bU_1\bGamma\bU_2^\top$  of $\bH^m$. Then $\hat\bA^m=(\bN+\hat\bE^m)(\bN+\hat\bE^m)^\top$.
	\end{lemma}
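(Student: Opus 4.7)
\textbf{Proof proposal for Lemma \ref{lma:4.3}.} The plan is a direct algebraic verification: expand $\bN+\hat\bE^m$ using the definitions, and show that the cross terms collapse because $\bV,\bLambda$ are the top eigenvectors/eigenvalues of $\bSigma$ and $\hat\bV^m,\hat\bLambda^m$ are those of $\hat\bSigma^m$.

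First I would substitute $\bN=\bV\bLambda\bQ^*$ (from the definition of $\bQ^*$) into the expression $\bN+\hat\bE^m$, getting
\[
\bN+\hat\bE^m=\bV\bLambda\bQ^*+\hat\bSigma^m\hat\bV^m\hat\bH^m\bQ^*-\bSigma\bV\bQ^*.
\]
The key simplification is the identity $\bSigma\bV=\bV\bLambda$, which holds because the columns of $\bV$ are eigenvectors of $\bSigma$ with eigenvalues on the diagonal of $\bLambda$. This makes the first and third terms cancel, leaving
\[
\bN+\hat\bE^m=\hat\bSigma^m\hat\bV^m\hat\bH^m\bQ^*.
\]

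Next I would compute $(\bN+\hat\bE^m)(\bN+\hat\bE^m)^\top$. Since $\bQ^*\in\cO_{K\times K}$ and $\hat\bH^m=\bU_1\bU_2^\top\in\cO_{K\times K}$ are both orthogonal, the product $\bQ^*\bQ^{*\top}$ and $\hat\bH^m\hat\bH^{m\top}$ each equal $\bI_K$. Using also that $\hat\bSigma^m$ is symmetric, the expression reduces to $\hat\bSigma^m\hat\bV^m\hat\bV^{m\top}\hat\bSigma^m$. Finally, applying the eigen-relation $\hat\bSigma^m\hat\bV^m=\hat\bV^m\hat\bLambda^m$ on the left (and its transpose on the right) yields $\hat\bV^m(\hat\bLambda^m)^2\hat\bV^{m\top}=\hat\bA^m$.

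I expect no real obstacle here: the lemma is essentially a bookkeeping identity engineered by the definition of $\hat\bE^m$, and the sign-fixing matrix $\hat\bH^m$ plays no role in the final product because it is orthogonal (its purpose will surface only later, when one controls $\|\hat\bE^m\|$ via a Davis--Kahan-type bound in the subsequent statistical analysis). The only thing to be slightly careful about is the order of the orthogonal factors $\hat\bH^m\bQ^*$ when transposing, and confirming that $\bSigma\bV=\bV\bLambda$ uses only the top-$K$ eigenpairs and not a full diagonalization.
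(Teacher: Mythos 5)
Your proposal is correct and follows essentially the same route as the paper's proof: both hinge on the cancellation $\bN=\bSigma\bV\bQ^*$ (via $\bSigma\bV=\bV\bLambda$), the orthogonality of $\hat\bH^m$ and $\bQ^*$, and the eigen-relation $\hat\bSigma^m\hat\bV^m=\hat\bV^m\hat\bLambda^m$. The only cosmetic difference is that the paper applies the eigen-relation before forming the outer product while you apply it after; the computation is otherwise identical.
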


Let us make several remarks on $\hat\bE^m$.
It is well-known that 
\$
\hat\bH^m=\argmin_{\bO\in\cO_{K\times K}}\norm{\hat\bV^m\bO-\bV}_{\rF}
\$ 
and thus $\hat\bV^m\hat\bH^m$ is a good estimator of $\bV$~\citep{Chen20}. 
Furthermore, by Lemma \ref{lma:a2}, when $\epsilon=\max_m\norm{\cE^m}_2/\Delta_K\leq1/10$ with $\cE^m=\hat\bSigma^m-\bSigma$, $\hat\bV^m\hat\bH^m$ has the following first-order expansion around $\bV$,
	\#\label{equ:4.1}
	\hat\bV^m\hat\bH^m=\bV+g(\cE^m\bV)+\cO_{\rF}(\epsilon^2),
	\#
	where $g$ is a \emph{linear} function defined in Lemma \ref{lma:a2}. Substituting \eqref{equ:4.1} into the definition of $\hat\bE^m$, we obtain the following linear expansion of $\hat\bE^m$ in terms of $\cE^m$,
	\#\label{equ:4.2}
	\hat\bE^m=\cE^m\bV\bQ^*+\bSigma g(\cE^m\bV)\bQ^*+\cO_{F}(\epsilon^2).
	\#
Since $g$ is {linear} in its argument, the leading term of $\frac{1}{M}\sum_{m=1}^M\hat\bE^m$ is linear in $\frac{1}{M}\sum_{m=1}^M\cE^m$. This enables an upper bound for $\norm{\frac{1}{M}\sum_{m=1}^M\hat\bE^m}_{\rF}$, provided by the following lemma.
	\begin{lemma}[Bounding $\norm{M^{-1}\sum_{m=1}^M\hat\bE^m}_{\rF}$]\label{lma:4.4}
	\textit{Suppose $\Delta_K>0$ and $\norm{\bSigma}_2$ is bounded. Let $\cE^m=\hat\bSigma^m-\bSigma$ and $\epsilon=\max_m\norm{\cE^m}_2/\Delta_K$. When $\epsilon\leq 1/10$, the following bound
	\$
	\norm{\frac{1}{M}\sum_{m=1}^M\hat\bE^m}_{\rF}\leq C\norm{\frac{1}{M}\sum_{m=1}^M\cE^m}_2+\cO(\epsilon^2)
	\$
	holds for some constant $C>0$.}
	\end{lemma}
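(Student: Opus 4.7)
The plan is to start from the first-order expansion \eqref{equ:4.2} of $\hat\bE^m$ and exploit the fact that its leading term is \emph{linear} in $\cE^m$. Writing $\bar\cE := M^{-1}\sum_m \cE^m$, I would average \eqref{equ:4.2} over $m$ and invoke the linearity of $g$ (from Lemma~\ref{lma:a2}) to commute the averaging past $g$, yielding
\$
\frac{1}{M}\sum_{m=1}^M \hat\bE^m = \bar\cE\,\bV\bQ^* + \bSigma\, g(\bar\cE\,\bV)\,\bQ^* + \frac{1}{M}\sum_{m=1}^M \cO_{\rF}(\epsilon^2).
\$
The triangle inequality then reduces everything to bounding the two leading deterministic terms by $\|\bar\cE\|_2$, plus handling the averaged remainder.

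For the first term, since $\bV\bQ^*$ has orthonormal columns (as $\bV\in\cO_{p\times K}$ and $\bQ^*\in\cO_{K\times K}$), the standard inequality $\|A\bV\bQ^*\|_{\rF}=\|A\bV\|_{\rF}\leq\sqrt{K}\,\|A\|_2$ gives $\|\bar\cE\,\bV\bQ^*\|_{\rF}\leq\sqrt{K}\,\|\bar\cE\|_2$. For the second term, I would combine $\|\bSigma\|_2\lesssim 1$ with the Frobenius isometry of right-multiplication by $\bQ^*$ and the boundedness of $g$ in Frobenius norm (from Lemma~\ref{lma:a2}, in analogy with the $\sqrt{2}\|\bR^{-1}\|_2$ control on $f_{\bR}$, with the corresponding scale here given by $1/\Delta_K$) to obtain $\|\bSigma\, g(\bar\cE\,\bV)\,\bQ^*\|_{\rF}\lesssim \|\bar\cE\,\bV\|_{\rF}\leq\sqrt{K}\,\|\bar\cE\|_2$. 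Summing these two produces the $C\|\bar\cE\|_2$ term, with the constant absorbing $K$, $\|\bSigma\|_2$, and $1/\Delta_K$. The averaged remainder is itself $\cO_{\rF}(\epsilon^2)$, since by the triangle inequality the Frobenius norm of an average is at most the maximum Frobenius norm of its summands.

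The main obstacle, I expect, is not this final assembly but rather having \eqref{equ:4.2} in precisely the form required: the remainder must be controlled in \emph{Frobenius} norm by $\epsilon^2$ and \emph{uniformly} in $m$. This rests on the quadratic Frobenius-norm control on $\hat\bV^m\hat\bH^m-\bV$ that underpins \eqref{equ:4.1}, which in turn requires $\epsilon\leq 1/10$ to keep the Davis--Kahan-style expansion of Lemma~\ref{lma:a2} valid, and it requires that the subsequent multiplication by $\bSigma$ and composition with the bounded linear map $g$ be carried out without downgrading to operator norm. Once \eqref{equ:4.2} is in hand in this form, the rest is routine matrix-norm bookkeeping as described above.
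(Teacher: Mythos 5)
Your proposal is correct and follows essentially the same route as the paper's proof: average the first-order expansion \eqref{equ:4.2}, use linearity of $g$ to commute the average past it, bound $\norm{\bar\cE\bV\bQ^*}_{\rF}\leq\sqrt{K}\norm{\bar\cE}_2$ and $\norm{\bSigma g(\bar\cE\bV)\bQ^*}_{\rF}\leq\Delta_K^{-1}\lambda_K\sqrt{K}\norm{\bar\cE}_2$ (the paper gets the latter from $\norm{\bSigma\bG_j}_2\leq\Delta_K^{-1}\lambda_K$, matching your $1/\Delta_K$ scale), and absorb the uniformly $\cO_{\rF}(\epsilon^2)$ remainders from Lemma~\ref{lma:a2}. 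No gaps.
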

	
	To apply Theorem \ref{thm:3.3}, we also need to upper bound the max norm $\epsilon_0=\max_m\norm{\hat\bE^m}_{\max}$. Again, this is based on the first-order expansion \eqref{equ:4.2} of $\bE^m$.
	
	\begin{lemma}[Bounding $\max_m\norm{\hat\bE^m}_{\max}$]\label{lma:4.5}
	\textit{Assume $\Delta_K>0$ and $\norm{\bSigma}_2$ is bounded. When $\epsilon=\max_m\norm{\cE^m}_2/\Delta_K\leq 1/10$, we have with probability at least $1-2Me^{-C_1n\delta_1^2}-Me^{-C_2\sqrt{\delta_2n/r}}$ that
	\$
	\max_m\norm{\hat\bE^m}_{\max}\leq C_3\sqrt{\frac{\log(p)}{n}}+\delta_1+\delta_2,
	\$
	for some constants $C_1,C_2,C_3>0$ and $r=\textnormal{Tr}(\bSigma)/\lambda_1(\bSigma)$. In addition, when $n\gtrsim \log^3(pM)r^2$, we have with probability at least $1-2p^{-1}$ that
	\$
	\max_m\norm{\hat\bE^m}_{\max}\leq C\sqrt{\frac{\log(pM)}{n}},
	\$
	for some constant $C>0$.}
	\end{lemma}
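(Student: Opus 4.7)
The plan is to start from the linear expansion \eqref{equ:4.2},
\[
\hat\bE^m = \cE^m\bV\bQ^* + \bSigma\,g(\cE^m\bV)\bQ^* + \cO_{\rF}(\epsilon^2),
\]
with $\epsilon = \max_m \norm{\cE^m}_2/\Delta_K$, and control each of the three summands in max norm.

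For the leading term, $(\cE^m\bV\bQ^*)_{ij} = \be_i^\top(\hat\bSigma^m - \bSigma)\bw_j$, where $\bw_j$ is the $j$-th column of $\bV\bQ^*$, a unit vector. Because $\bx_i^m$ is sub-Gaussian, the centered product $(\be_i^\top\bx_i^m)(\bw_j^\top\bx_i^m) - \be_i^\top\bSigma\bw_j$ is sub-exponential with parameter $\lesssim \norm{\bSigma}_2 \lesssim 1$, and Bernstein's inequality for the sample average over $n$ samples gives, for each $(i,j,m)$, a tail of the form $C(\sqrt{t/n}+t/n)$ with probability $1-2e^{-t}$. Choosing $t = c\log p + c'n\delta_1^2$ (so that the Gaussian regime of Bernstein supplies $\sqrt{\log p/n}+\delta_1$) and union-bounding over the $pK$ entries and $M$ machines delivers the $C_3\sqrt{\log p/n} + \delta_1$ contribution with failure probability $2Me^{-C_1 n\delta_1^2}$, with the $p^{-c}$ factor from the $\log p$ piece absorbed into constants.

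For the middle term, I would invoke the bound $\norm{g(\bX)}_{\rF} \lesssim \norm{\bX}_{\rF}/\Delta_K$ from Lemma \ref{lma:a2} to obtain
\[
\norm{\bSigma\,g(\cE^m\bV)\bQ^*}_{\max} \leq \norm{\bSigma\,g(\cE^m\bV)\bQ^*}_{\rF} \lesssim \norm{\cE^m}_2/\Delta_K,
\]
and bound the remainder $\cO_{\rF}(\epsilon^2)$ by its Frobenius norm $\lesssim \norm{\cE^m}_2^2/\Delta_K^2$. The middle term can be absorbed into $\delta_1$ using a Gaussian-regime tail on $\norm{\cE^m}_2$. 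For the remainder, I use that for sample covariances of sub-Gaussian vectors, $\norm{\cE^m}_2$ is sub-exponential with $\psi_1$-scale $\sqrt{r/n}$, where $r = \mathrm{Tr}(\bSigma)/\lambda_1(\bSigma)$ (a Koltchinskii--Lounici type bound); its square then satisfies $P(\norm{\cE^m}_2^2 \geq \delta_2) \leq \exp(-C\sqrt{\delta_2 n/r})$, yielding the $\delta_2$ contribution with the advertised probability after a union bound over $m$.

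The main technical obstacle is precisely to justify the sub-exponential tail $\exp(-C_2\sqrt{\delta_2 n/r})$ on $\norm{\cE^m}_2^2$: this requires a tight sample-covariance concentration inequality in which the effective rank $r$ sets the $\psi_1$-scale of $\norm{\cE^m}_2$, so that squaring produces the atypical $\sqrt{\cdot}$ exponent. The second statement then follows by plugging $\delta_1 \asymp \sqrt{\log(pM)/n}$ and $\delta_2 \asymp \log^2(pM)\,r/n$ into the first statement; under $n \gtrsim \log^3(pM)\,r^2$, this $\delta_2$ is $\lesssim \sqrt{\log(pM)/n}$, so all three contributions reduce to $\lesssim \sqrt{\log(pM)/n}$ and the failure probability collapses to $\lesssim p^{-1}$.
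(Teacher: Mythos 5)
Your treatment of the leading term $\cE^m\bV\bQ^*$ (entrywise Bernstein plus a union bound over the $pK$ entries and $M$ machines) and of the remainder $\cO_{\rF}(\epsilon^2)$ (sub-exponential tail on $\norm{\cE^m}_2$ with $\psi_1$-scale $\lambda_1\sqrt{r/n}$, hence the $\exp(-C\sqrt{\delta_2 n/r})$ tail for its square) matches the paper's argument in substance; the paper reaches the same $\sqrt{\log(p)/n}+\delta_1$ bound for the leading term via a polarization identity and a Gaussian-width concentration inequality over a finite set of cardinality $O(p)$, which is interchangeable with your route.

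The genuine gap is in the middle term $\bSigma\,g(\cE^m\bV)\bQ^*$. You bound its max norm by its Frobenius norm, which gives $\lesssim\norm{\cE^m}_2/\Delta_K$; this quantity concentrates around $\lambda_1\sqrt{r/n}/\Delta_K$, so it cannot be "absorbed into $\delta_1$" at the scale $\delta_1\asymp\sqrt{\log(pM)/n}$ unless $r\lesssim\log(pM)$, which is not assumed (the hypothesis $n\gtrsim\log^3(pM)r^2$ does not force this). Moreover, $\norm{\cE^m}_2$ obeys only a sub-exponential tail $\exp(-\delta_1/(C\lambda_1\sqrt{r/n}))$, not the Gaussian-regime tail $\exp(-Cn\delta_1^2)$ your failure probability requires, and that tail is vacuous for $\delta_1$ below the typical scale $\sqrt{r/n}$. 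The whole point of the lemma is that $\norm{\hat\bE^m}_{\max}$ is smaller than $\norm{\hat\bE^m}_{\rF}$ by a factor of order $\sqrt{p/\log p}$, so any step that passes through a Frobenius or operator norm bound on a leading-order term forfeits the result. The fix is to treat the middle term entrywise exactly as you treated the first: write $(\bSigma\,g(\cE^m\bV))_{lk}=-\be_l^\top\bSigma\bG_k\bSigma^{1/2}\bigl(\tfrac{1}{n}\bZ^{m\top}\bZ^m-\bI_p\bigr)\bSigma^{1/2}\bv_k$ with $\norm{\bSigma^{1/2}\bG_k\bSigma}_2\leq\lambda_K^{3/2}\Delta_K^{-1}$ bounded, and run the same polarization--Bernstein (or Gaussian-width) argument over the $O(pK)$ bounded vector pairs to obtain $C\sqrt{\log(p)/n}+\delta_1$ for this term as well. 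With that repair the rest of your derivation, including the substitution $\delta_1\asymp\sqrt{\log(pM)/n}$, $\delta_2\asymp\log^2(pM)r/n$ and the condition $n\gtrsim\log^3(pM)r^2$, goes through as in the paper.
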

	
		In Lemma \ref{lma:4.5}, we show that $\norm{\hat\bE^m}_{\max}\lesssim\sqrt{\log(p)/n}$ with high probability when $n\gtrsim\log^3(p)r^2$. By \eqref{equ:4.2} and Lemma~\ref{lma:a1}, we can show that $\norm{\hat\bE^m}_{\rF}\lesssim\sqrt{p/n}$ with high probability. The upper bound on $\norm{\hat\bE^m}_{\max}$ is thus smaller by a factor of $\sqrt{p/\log(p)}$ than the upper bound on $\norm{\hat\bE^m}_{F}$. This implies that $\hat\bE^m$ is delocalized across the entries. Moreover, Lemma \ref{lma:4.5} implies that when we apply Theorem \ref{thm:3.3} to the LRC-dPCA algorithm, the remainder term $\cO_{\max}(\epsilon_0^2)$ is negligible compared to the leading term $\frac{1}{M}\sum_{m=1}^M\hat\bE^m-\bN f_{\bR}(\frac{1}{M}\sum_{m=1}^M\hat\bE^{1,m})$. This provides the last key ingredient to the following theorem, which gives an upper bound for $\norm{\tilde\bN-\bN}_{F}$. Here $\tilde\bN$ is the reduced Cholesky factor of the Karcher mean $\tilde\bA$.
		
	\begin{theorem}[Bounding $\norm{\tilde\bN-\bN}_{\rF}$]\label{thm:4.6}
{\it	Assume $\Delta_K>0$ and $\norm{\bSigma}_2$ is bounded. 
Partition $\bN=(\bR^\top\ \bB^\top)^\top$ such that $\bR\in\RR^{K\times K}$ and $\bB\in\RR^{(p-K)\times K}$ and assume $\norm{\bR^{-1}}_2\leq C$ for some constant $C>0$.
When $\epsilon=\max_m\norm{\cE^m}_2/\Delta_K\leq 1/10$ and $\epsilon_0=\max_{m}\norm{\hat\bE^m}_{\max}$ is sufficiently small, the following bound
	\$
	\norm{\tilde\bN-\bN}_{\rF}\leq \cO\left(\norm{\frac{1}{M}\sum_{m=1}^M\cE^m}_2\right)+\cO(\epsilon^2)+\cO(\sqrt{p}\epsilon_0^2)
	\$
	holds. Define $r=\textnormal{Tr}(\bSigma)/\lambda_1(\bSigma)$, $\tilde r_1=(\log^2(pM)r)\vee(\log(pM)\sqrt{p})$ and $\tilde r_2=\sqrt{p}\log^4(pM)r^2$. Then we have with probability at least $1 - 4p^{-1}$ that
	\$
    \norm{\tilde\bN-\bN}_{\rF}\leq \cO\left(\frac{\log(p)\sqrt{r}}{\sqrt{Mn}}\right)+\cO\left(\frac{\tilde r_1}{n}\right)+\cO\left(\frac{\tilde r_2}{n^2}\right).
    \$  
    When $n\gtrsim \tilde r_2/\tilde r_1$, the third term is negligible. When we further assume $n\gtrsim M\tilde r_1^2/(\log^2(p)r)$, the upper bound reduces to
    \$
    \norm{\tilde\bN-\bN}_{\rF}\leq \cO\left(\frac{\log(p)\sqrt{r}}{\sqrt{Mn}}\right).
    \$}
	\end{theorem}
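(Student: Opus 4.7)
\textbf{Proof plan for Theorem~\ref{thm:4.6}.}
The plan is to chain the ingredients already developed in Sections~\ref{sec:3.2} and \ref{sec:4.2}, then close with standard sub-Gaussian concentration for sample covariances. First, I would invoke Theorem~\ref{thm:3.3} with the noise matrices $\hat\bE^m$ supplied by Lemma~\ref{lma:4.3}. Setting $\epsilon_0=\max_m\norm{\hat\bE^m}_{\max}$ and assuming (to be verified at the end) that $\epsilon_0$ is small enough, this yields the expansion
\[
\tilde\bN-\bN=\frac{1}{M}\sum_{m=1}^M\hat\bE^m-\bN\, f_\bR\!\Big(\frac{1}{M}\sum_{m=1}^M\hat\bE^{1,m}\Big)+\cO_{\max}(\epsilon_0^2).
\]

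Second, I would take Frobenius norms and apply the triangle inequality. Lemma~\ref{lma:4.4} bounds the first term by $C\norm{M^{-1}\sum\cE^m}_2+\cO(\epsilon^2)$. For the second term, the Lipschitz estimate $\norm{f_\bR(\cdot)}_{\rF}\leq \sqrt{2}\,\norm{\bR^{-1}}_2\norm{\cdot}_{\rF}$ noted after Lemma~\ref{lma:3.2}, together with $\norm{\bN}_2$ bounded (since $\norm{\bSigma}_2$ is) and $\norm{\bR^{-1}}_2\leq C$, reduces it to the same quantity applied to the block $\hat\bE^{1,m}$. The remainder $\cO_{\max}(\epsilon_0^2)$ on a $p\times K$ matrix contributes at most $\sqrt{pK}\cdot \cO(\epsilon_0^2)=\cO(\sqrt{p}\epsilon_0^2)$ in Frobenius norm, since $K$ is fixed. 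This delivers the first (semi-deterministic) bound in the statement.

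Third, I would substitute three concentration inputs. The quantity $M^{-1}\sum\cE^m$ is exactly the deviation of the full-sample covariance from $\bSigma$, so standard operator-norm concentration for sub-Gaussian sample covariances gives $\norm{M^{-1}\sum\cE^m}_2\lesssim \log(p)\sqrt{r/(Mn)}$ with probability at least $1-p^{-1}$; this produces the leading term. A union bound over $m$ of the per-machine deviation gives $\max_m\norm{\cE^m}_2\lesssim \log(pM)\sqrt{r/n}$, whence $\epsilon^2\lesssim \log^2(pM)\,r/n$, which contributes to $\tilde r_1/n$. Lemma~\ref{lma:4.5} gives $\sqrt{p}\epsilon_0^2\lesssim \sqrt{p}\log(pM)/n$ under $n\gtrsim \log^3(pM)r^2$, supplying the other piece of $\tilde r_1/n$; the residual $\tilde r_2/n^2$ collects the higher-order terms arising from the $\cO(\epsilon^2)$ remainder in Lemma~\ref{lma:4.4} combined with the $\sqrt{p}$ amplification.

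Finally, a union bound over all these events yields failure probability at most $4p^{-1}$. The side condition $n\gtrsim \tilde r_2/\tilde r_1$ suppresses the third term, and the further assumption $n\gtrsim M\tilde r_1^2/(\log^2(p)r)$ forces the second term below the first, leaving the clean rate $\log(p)\sqrt{r/(Mn)}$. The main obstacle throughout is the careful bookkeeping of conversions between max, Frobenius and operator norms---in particular the $\sqrt{pK}$ factor when exporting $\cO_{\max}(\epsilon_0^2)$ to Frobenius norm---and verifying, on the high-probability event of Lemma~\ref{lma:4.5}, that $\epsilon_0$ is indeed small enough to legitimately invoke Theorem~\ref{thm:3.3}.
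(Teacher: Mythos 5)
Your plan matches the paper's proof essentially step for step: invoke Theorem~\ref{thm:3.3} with the noise matrices from Lemma~\ref{lma:4.3}, bound the three resulting terms via Lemma~\ref{lma:4.4}, the Lipschitz property of $f_{\bR}$, and the $\sqrt{pK}$ conversion of the $\cO_{\max}(\epsilon_0^2)$ remainder, then substitute the concentration bounds (Lemma~\ref{lma:a1} for the averaged and per-machine covariance deviations, Lemma~\ref{lma:4.5} for $\epsilon_0$) with a union bound. The only minor imprecision is the provenance of the $\tilde r_2/n^2$ term: it arises from squaring the second-order piece of the $\epsilon_0$ bound in Lemma~\ref{lma:4.5} (i.e., $\sqrt{p}\,(\log^2(pM)r/n)^2$), not from the $\cO(\epsilon^2)$ remainder of Lemma~\ref{lma:4.4}, which only feeds the $\tilde r_1/n$ term.
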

	
	Theorem \ref{thm:4.6} shows that given sufficiently large local sample size, i.e., $n\gtrsim M\tilde r_1^2/(\log^2(p)r)$, $\tilde\bN$ is as good as the full sample estimator of $\bN$ in terms of the Frobenius norm. Moreover, $\norm{\tilde\bN-\bN}_{\rF}$ is of the same order as $\norm{M^{-1}\sum_{m=1}^M\cE^m}_2$ (see Lemma \ref{lma:a1}). Note that the singular vectors of $\bN$ are equal to $\bV$, the singular values of $\bN$ are equal to $\bLambda$, and LRC-dPCA uses the singular vectors of $\tilde\bN$ as an estimator of $\bV$. Then it follows from  Wedin's sin($\bTheta$) theorem 
	\citep{Chen20} that LRC-dPCA and full sample PCA share the same performance in eigenvector estimation.

\begin{remark}\label{rmk:5}
	\textit{Similar to Lemma \ref{lma:4.5}, we can show that $\norm{\hat\bV^m\hat\bH^m-\bV}_{\max}\lesssim\sqrt{{\log(p)}/{n}}$ with high probability when $n\gtrsim \log^3(p)r^2$. Compared to the upper bound $\norm{\hat\bV^m\hat\bH^m-\bV}_{F}\lesssim\sqrt{{p}/{n}}$, the max norm bound again implies that the residual matrix $\hat\bV^m\hat\bH^m-\bV$ does not concentrate on a few coordinates. This has connections to the infinity norm eigenvector perturbation theory \citep{Fan18,Chen20,Abbe20,damle2020uniform,cape19b}. However, most applications in their works require incoherence conditions on the eigenvectors. In contrast, we do not require such conditions. }
\end{remark}

\subsection{Manifold selection}

As one may notice, $\bA=\bV\bLambda^2\bV^\top$ may not belong to $S^*(p,K)$, i.e., the first $K$ columns of $\bA$ may be linearly dependent. If we decompose $\bA=\bF\bF^\top$ for some $\bF\in\RR^{p\times K}$ and write $\bF=(\bF_1^\top\ \bF_2^\top)^\top$ with $\bF_1\in\RR^{K\times K}$ and $\bF_2\in\RR^{(p-K)\times K}$. Then the smallest singular value  $\sigma_{\min}(\bF_1)$ of $\bF_1$ may be zero or very small depending on $p$. In these cases, the condition $\norm{\bR^{-1}}_2\leq C$ for some constant $C>0$ in Theorem~\ref{thm:4.6} may not hold, and it is not suitable to directly use the manifold $S^*(p,K)$ in the LRC-dPCA algorithm. 

To fix this issue, we will utilize  $\frac{p!}{(p-K)!}$ cousins of the manifold $S^*(p,K)$, or equivalently $\cL^*(p,K)$. Let us introduce these cousin manifolds first. Recall that $\cL^*(p,K)$ consists of $\bN\in\RR^{p\times K}$ such that $\bN_{1:K,1:K}$ is a lower triangular matrix with positive diagonal elements. Here $\bN_{1:K,1:K}\in\RR^{K\times K}$  represents the sub-matrix of $\bN$ with row index $[1,\ldots,K]$ and column index $[1,\ldots,K]$. Let $\cI=[i_1,\ldots,i_K]$ be an ordered index set of size $K$. A cousin $\cL^*_{\cI}(p,K)$ of $\cL^*(p,K)$ consists of $\bN\in\RR^{p\times K}$ such that $\bN_{\cI,1:K}$ is a lower triangular matrix with positive diagonal elements. Similarly, we define $S^*_{\cI}(p,K)$ as the set of all matrices in $S(p,K)$ with the $\cI$-th rows linearly independent. Similar to the relationship between $S^*(p,K)$ and $\cL^*(p,K)$, for any $\bA\in S^*_{\cI}(p,K)$, there exists a unique element $\bN\in\cL^*_{\cI}(p,K)$ such that $\bA=\bN\bN^\top$. Also, we define the Riemannian structure on $S^*_{\cI}(p,K)$ and $\cL^*_{\cI}(p,K)$ in a way similar to \eqref{equ:2.1} and \eqref{equ:2.2}. In addition, all theory established in Section~\ref{sec:3} and \ref{sec:4} can be rephrased in the language of $S^*_{\cI}(p,K)$. The only difference is that the row index set $[1,\dots,K]$ is replaced by $\cI$.

Now we are in a position to solve the challenge raised at the beginning of  this subsection. If $\bA=\bV\bLambda^2\bV^\top$ does not belong to $S^*(p,K)$, then we should choose a suitable ordered index set $\cI$ rather than $[1,\ldots,K]$, and then apply the LRC-dPCA algorithm on the manifold $S^*_{\cI}(p,K)$. Motivated by the condition $\norm{\bR^{-1}}\leq C$ in Theorem~\ref{thm:4.6}, we propose the $\texttt{find\_index}$ method in Algorithm \ref{alg:2}.
Given $\bV$, $\bLambda$, and $K$, the algorithm outputs an ordered index set $\cI$ of size $K$.
To avoid exhaustive search, the algorithm determines $\cI$ in a sequential manner. In the $k$th step, we choose an index $i\in[p]$ such that the $k$-by-$k$ matrix $\bT_k=\bT[c(\cI[1:(k-1)],i),c(1:k)]$ has the largest $\sigma_k(\bT_k)$ among all $p$ candidates, where $c(\cdot)$ indicates the index set. In practice when $\bV$ and $\bLambda$ is unknown, we can use $\hat\bV^1$ and $\hat\bLambda^1$ to find a suitable index set and this index set is then shared by all machines.

\begin{algorithm}[tb]
		\caption{\texttt{find\_index} in LRC-dPCA}
		\label{alg:2}
		\begin{algorithmic}
			\STATE {\bfseries Input:} $\bV,\bLambda$, $K$
			\STATE {\bfseries Output:} $\cI\subset[p]$
			\STATE Compute $\bT=\bV\bLambda$ and initialize $\cI=[0,\ldots,0]\in\ZZ^{K}$.
			\FOR{$k=1$ {\bfseries to} $K$}
			    \FOR{$i=1$ {\bfseries to} $p$}
			        \STATE Set $\bT_k = \bT[c(\cI[1:(k-1)],i),c(1:k)]$.
			        \STATE Compute score[$i$] = $\sigma_{k}(\bT_k)$.
			    \ENDFOR
			    \STATE Set $\cI[k]=\argmax_i\textnormal{score}[i]$.
			\ENDFOR
		\end{algorithmic}
	\end{algorithm}

\section{Numerical Experiments}\label{sec:5}

In this section, we present numerical experiments on three synthetic examples: averaging PSD matrices under the intrinsic model, the distributed PCA problems, and averaging PSD matrices under an extrinsic model.

\subsection{Averaging PSD matrices}

\begin{figure}[t]
    \centering
    \includegraphics[width=0.5\textwidth]{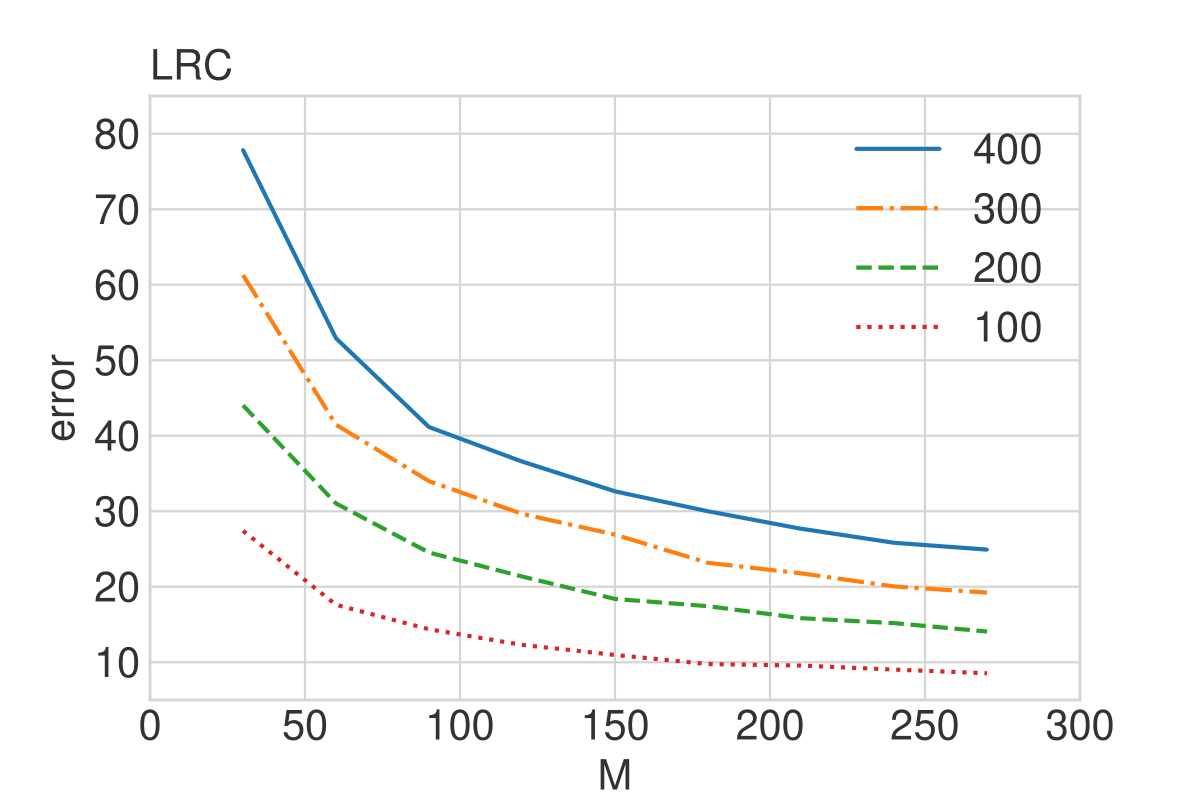}
    \includegraphics[width=0.5\textwidth]{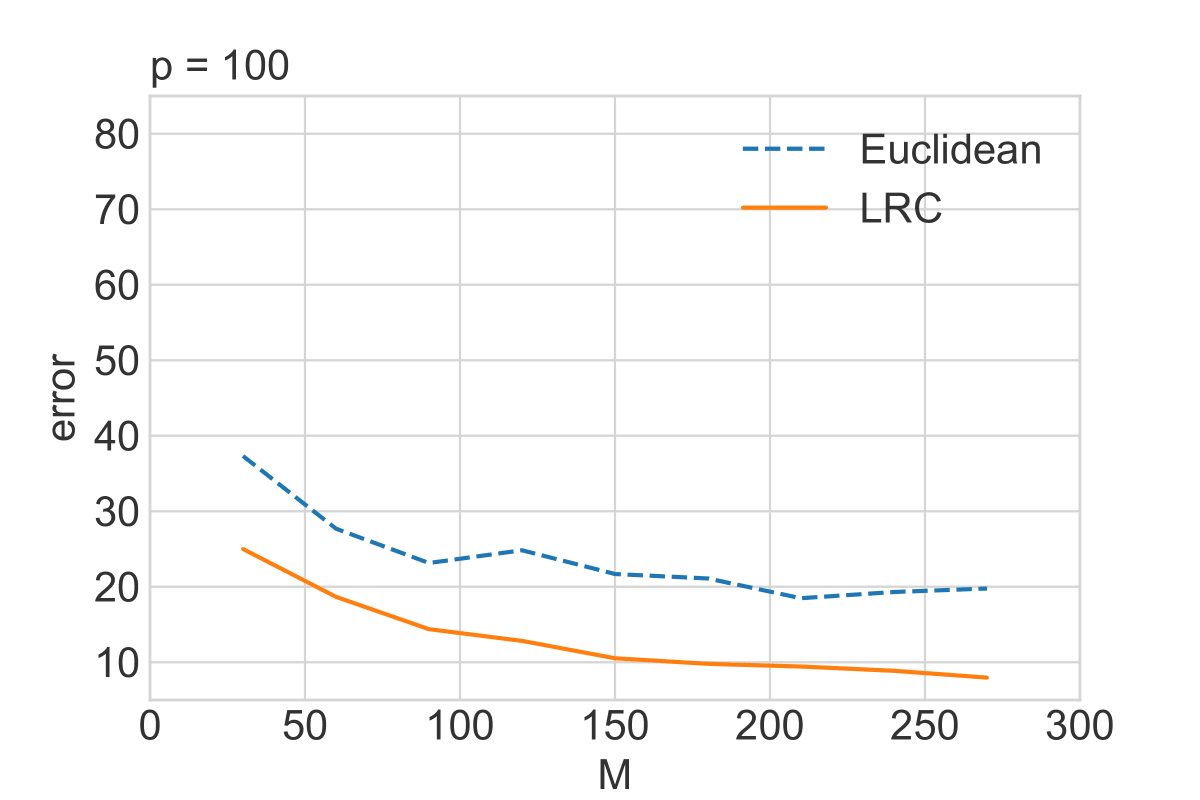}
    \caption{Averaging PSD matrices under the intrinsic model. Top figure: errors $\norm{\tilde\bA-\bA}_{\rF}$ against different $M$ and $p$ with four colored lines labeled by $p$. Bottom figure: comparisons between LRC and the Euclidean method in terms of $\norm{\tilde \bA-\bA}_{\rF}$ or $\norm{\tilde \bA^{\rm eu}-\bA}_{\rF}$ against different $M$.}
    \label{fig:1}
\end{figure}

Our first experiment is to illustrate the concentration of the Karcher mean \eqref{equ:2.3} under the intrinsic model \eqref{equ:3.1}, i.e., Theorem \ref{thm:3.1}. We set $K=5,\sigma^2=1$ and let $p$ vary across $[100,200,300,400]$ and let $M$ range from 30 to 270 with an increment of 30. For each $p$, we generate a $p\times p$ matrix $\bSigma$ with elements $\textnormal{i.i.d.}$ $\cN(0,1)$, and then take $\bA=\bV\bLambda\bV^\top$, where $\bV=(\bv_1,\ldots,\bv_K)$ and $\bLambda=(\lambda_1,\ldots,\lambda_K)$ are the top $K$ left singular vectors and singular values of $\bSigma$, respectively. Given $M$, we generate $\{\bA^m\}_{m=1}^M$ from the intrinsic model \eqref{equ:3.1}. Then the Karcher mean $\tilde\bA$ of $\{\bA^m\}_{m=1}^M$ is computed and the error $\norm{\tilde\bA-\bA}_{F}$ is reported in the top figure in Figure~\ref{fig:1}. As our theory shows, the estimation error turns smaller as $M$ increases or $p$ decreases.

In addition, we compare the Karcher mean $\tilde \bA$, referred to as LRC, with the usual Euclidean method $\tilde A^{\rm eu}$, which is defined as the best rank-$K$ approximation of $M^{-1}\sum_{m=1}^M\bA^m$. We take $p=100$ and repeat the above data generation processing. The errors $\norm{\tilde\bA-\bA}_{\rF}$ and $\norm{\tilde\bA^{\rm eu}-\bA}_{\rF}$ are reported in the bottom figure in Figure~\ref{fig:1}. As displayed in the figure, under the intrinsic model, the geometry-aware method, LRC, outperforms the Euclidean method. This justifies the intuition that for models with specific geometric structures, it is better to take that geometric information into account.

\subsection{Distributed PCA}\label{sec:5.2}

\begin{figure}[t]
    \centering
    \includegraphics[width = 0.5\textwidth]{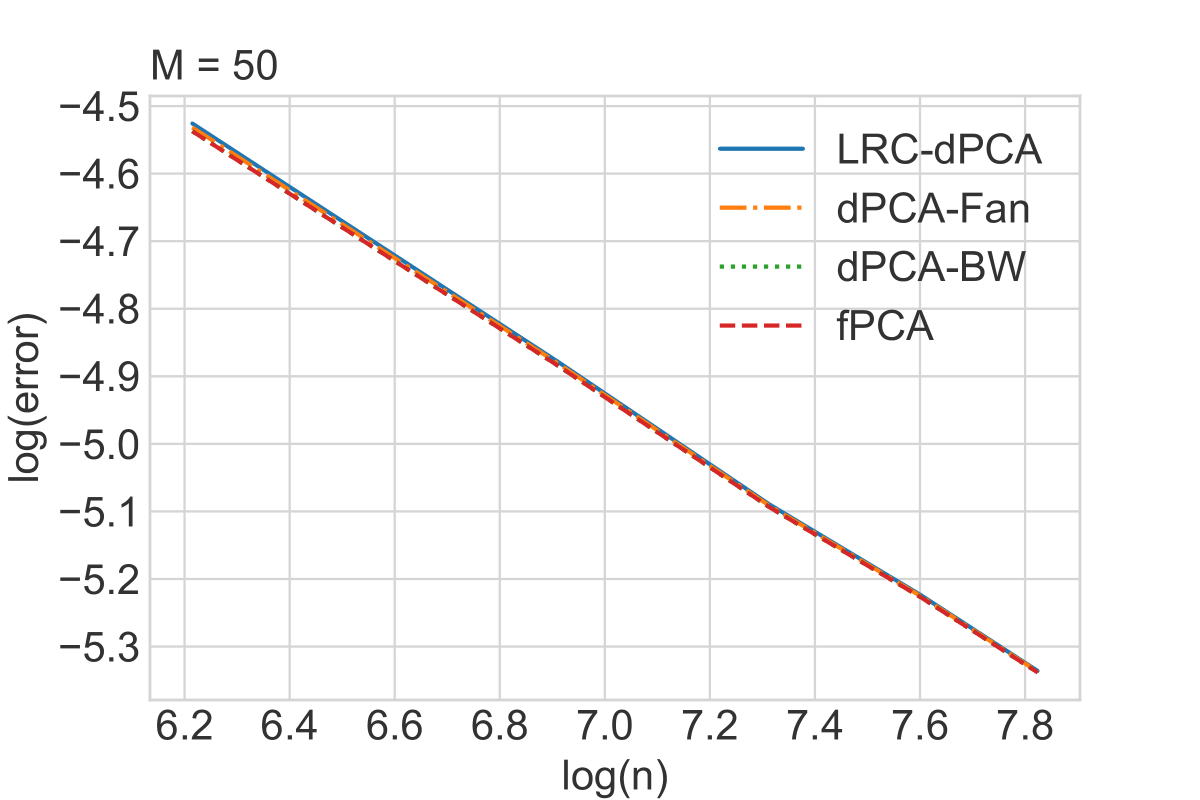}
    \includegraphics[width = 0.5\textwidth]{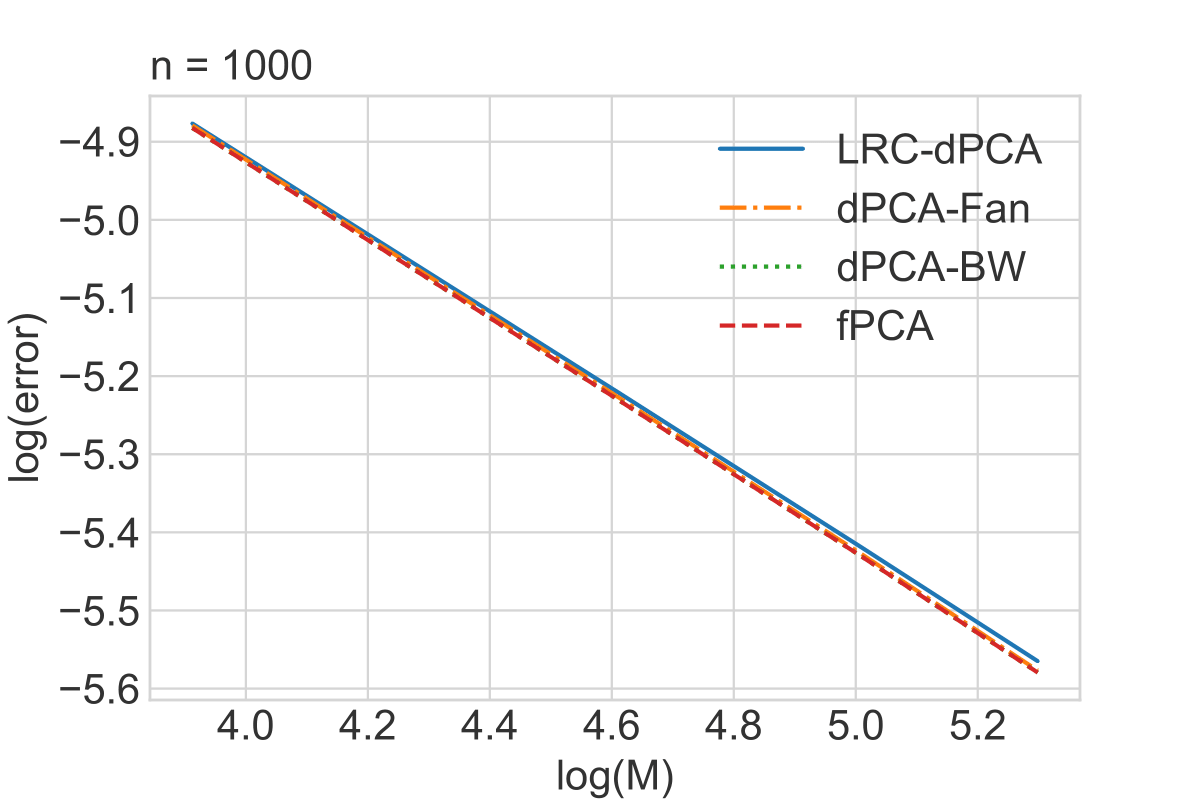}
    \caption{Comparisons of four distributed PCA algorithms, LRC-dPCA, dPCA-Fan, dPCA-BW, fPCA. Top figure: $M=50$ and $\log(\rm{error})\sim\log(n)$ is reported. Bottom figure: $n=1000$ and $\log(\rm{error})\sim\log(M)$ is displayed.}
    \label{fig:2}
\end{figure}

Our second experiment studies the Karcher mean under a general signal-plus-noise model. Specifically, we consider the distributed PCA problems and numerically verify Theorem~\ref{thm:4.6}, which shows that LRC-dPCA achieves the same performance as full sample PCA (fPCA). In our setting, $p=100$, $K=5$, the population covariance $\bSigma$ is generated by $\bSigma = \bV\bV^\top+0.3\bI_p$, where $\bV\in\RR^{p\times K}$ with elements $\rm{i.i.d.}$ $\cN(0,1)$. We first fix the number of machines $M=50$ and let the sub-sample size $n$ vary across $[500,1000,\ldots,2500]$. 
On the $m$-th machine, we generate $n$ $\rm{i.i.d.}$ samples $\{\bx_{i}^m\}_{i=1}^n$ from $\cN(\zero,\bSigma)$ and compute the local sample covariance matrix $\hat\bSigma^m=\sum_{i=1}^n\bx_i^m\bx_i^{m\top}/n$. Then we apply four methods , namely fPCA, LRC-dPCA, dPCA-Fan \citep{Fan19}, and dPCA-BW \citep{Bhaskara19}, to compute the top $K$ eigenvectors of $\bSigma$. Let $\hat\bV\in\cO_{p\times K}$ be the estimated top $K$ eigenvectors. The error is defined as $\norm{\hat\bV\hat\bV^\top-\bV(\bV^\top\bV)^{-1}\bV^\top}_{F}$, which is the distance between the population projection matrix $\bV(\bV^\top\bV)^{-1}\bV^\top$
and the estimated projection matrix $\hat\bV\hat\bV^\top$. For each $n$ and each method, the experiment is repeated 100 times and the average of error is recorded. The top figure in Figure~\ref{fig:2} displays the relationship between $\log(\rm{error})$ and $\log(n)$ for all methods. It turns out that all methods share similar performance and there is a linear relationship between $\log(\rm{error})$ and $\log(n)$ with slope $-1/2$, which verifies the relationship $\rm{error}\sim n^{-1/2}$. Next, we fix the sub-sample size $n=1000$ and let $M$ vary across $[50,100,\ldots,200]$ and repeat the above procedures. The relationship between $\log(\rm{error})$ and $\log(M)$ is reported in the bottom figure of Figure~\ref{fig:2}. As it displayed, all four methods are almost the same and there is also a linear relationship between $\log(\rm{error})$ and $\log(M)$ with slope $-1/2$, which indicates $\rm{error}\sim M^{-1/2}$. Since there is no specific geometric information in the setting, it is expected that LRC-dPCA only matches (rather than surpasses) the performance of the state-of-the-art methods, dPCA-Fan, dPCA-BW, and the optimal method, fPCA.

\subsection{Averaging PSD matrices (extrinsic)}

\begin{figure}[t]
    \centering
    \includegraphics[width=0.5\textwidth]{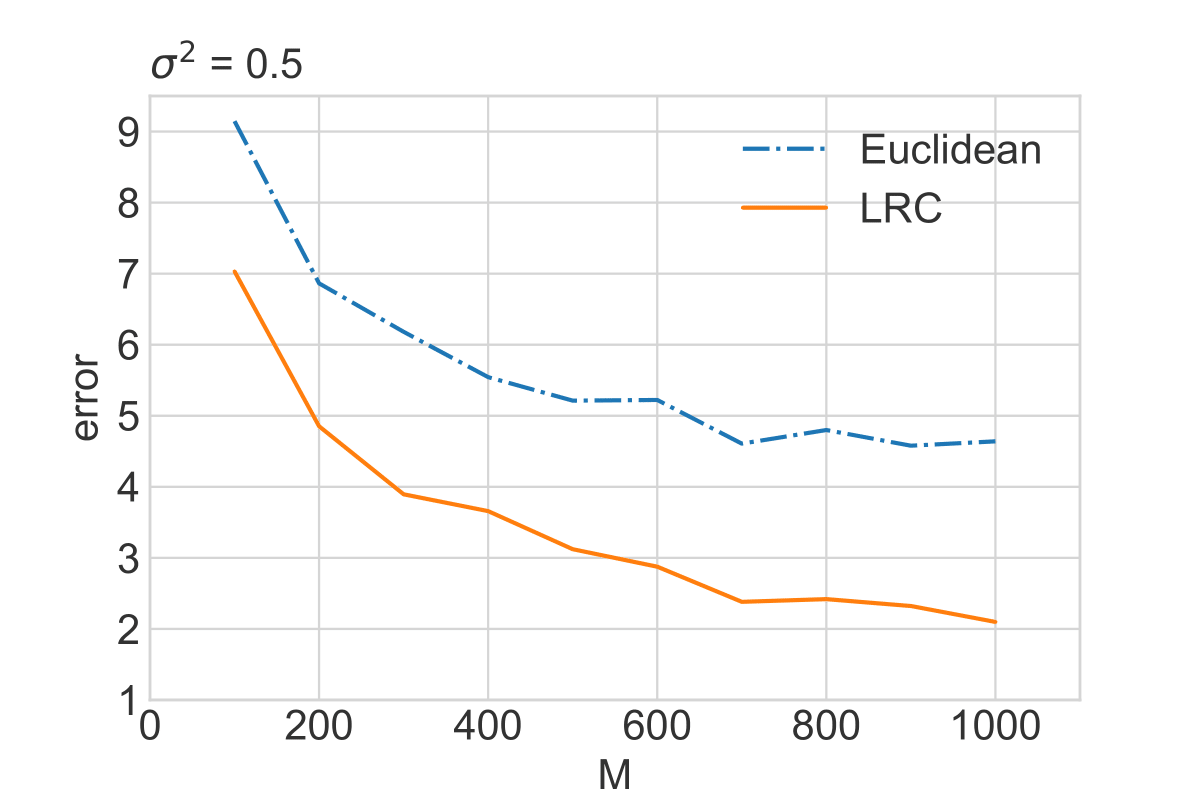}
    \includegraphics[width=0.5\textwidth]{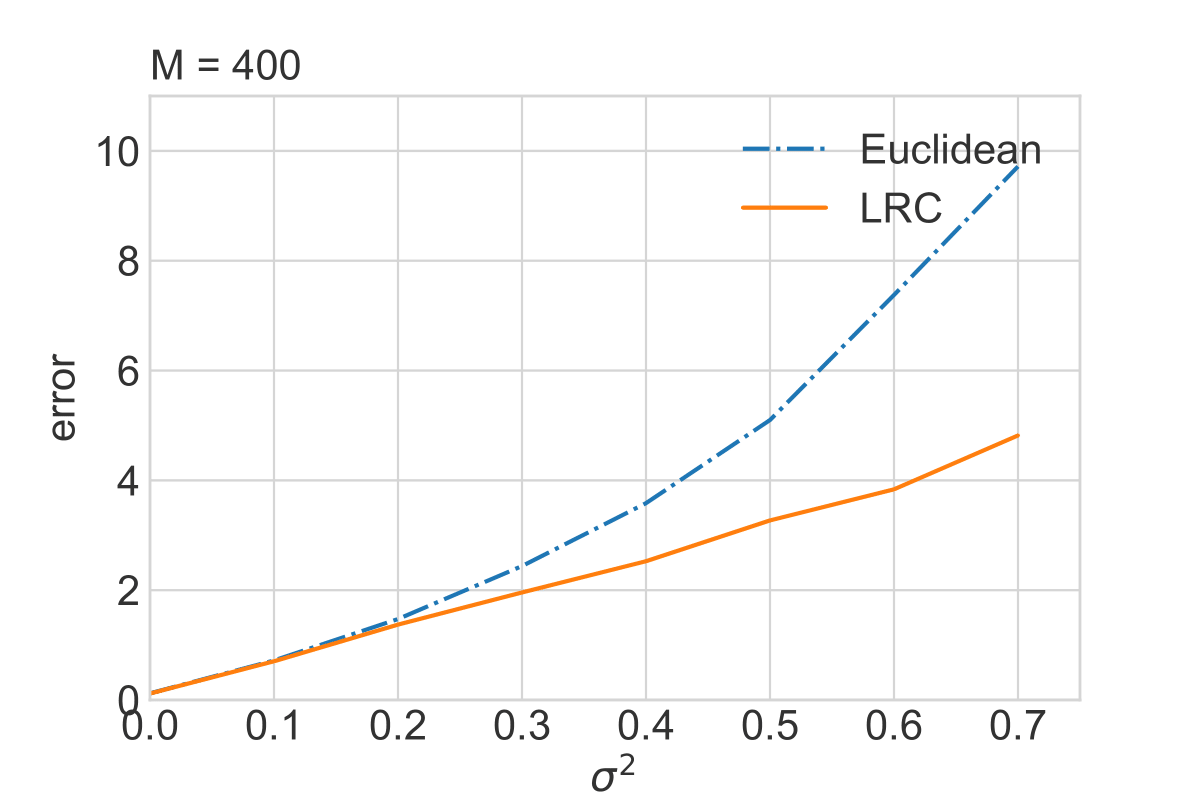}
    \caption{Comparisons of LRC and the Euclidean method in averaging PSD matrices under an extrinsic model. Top figure: $\sigma^2=0.5$ and the error against $M$ is reported. Bottom figure: $M=400$ and the error versus $\sigma^2$ is displayed.}
    \label{fig:3}
\end{figure}

Our third experiment considers another signal-plus-noise model, which adds extrinsic noises to the intrinsic model. Specifically, we set $p=100$, $K=5$, and we generate a $p\times p$ matrix $\bSigma$ with elements $\textnormal{i.i.d.}$ $\cN(0,1)$, and then take $\bA=\bV\bLambda\bV^\top$, where $\bV=(\bv_1,\ldots,\bv_K)$ and $\bLambda=(\lambda_1,\ldots,\lambda_K)$ are the top $K$ left singular vectors and singular values of $\bSigma$, respectively. Given $M$ and $\sigma^2$, we generate $\{\bA^m\}_{m=1}^M$ from the intrinsic model \eqref{equ:3.1}. Then we add extrinsic noises to $\bA^m$ as follows. For each $m$, we generate $\{\bx_i^m\}_{i=1}^{2000}$ $\rm{i.i.d.}$ from $\cN(\zero,\bA^m+0.01\bI_p)$, compute $\hat\bSigma^m=\sum_{i=1}^{2000}\bx_i^m\bx_i^{m\top}/2000$, and set $\bA'^m$ as the best rank-$K$ approximation of $\hat\bSigma^m$. We compute the Karcher mean $\tilde\bA$ of $\{\bA'^m\}_{m=1}^M$, which is referred to as LRC, and report the error $\norm{\tilde\bA-\bA}_{\rF}$. In contrast, we also apply the Euclidean method, which computes the best rank-$K$ approximation $\tilde\bA^{\rm eu}$ of $\sum_{m=1}^M\bA'^m/M$, and report the error $\norm{\tilde\bA^{\rm eu}-\bA}_{\rF}$. First, we set $\sigma^2=0.5$ and let $M$ vary across $[100,200,\ldots,1000]$. The errors of both methods are displayed in the top figure of Figure~\ref{fig:3}. As shown in the figure, the geometry-aware method, LRC, still outperforms the Euclidean method even if extrinsic noises are added to the intrinsic model. Next, we fix $M=400$ and let $\sigma^2$ range from $[0,0.1\ldots,0.7]$. The errors of both methods are shown in the bottom figure of Figure~\ref{fig:3}. Recall that $\sigma^2$ denotes the strength of intrinsic noises. The bottom figure indicates that when the intrinsic noises are small, then the geometry-aware method and the Euclidean method are comparable, but when the intrinsic noises becomes large, the geometry-aware method tends to outperform the Euclidean method. Overall, this experiment shows that, in a general signal-plus-noise model, if there exist large intrinsic noises, then it is better to utilize the geometry-aware method.

\section{Concluding Remarks}\label{sec:6}

This paper considers the geometry of restricted PSD matrices proposed by \citet{Neuman21}. In particular, we provide a non-asymptotic statistical analysis of the Karcher mean of restricted PSD matrices under an intrinsic model. Moreover, for general signal-plus-noise models, we establish a deterministic error bound concerning the Karcher mean. This is based on a linear perturbation expansion of the QR decomposition, which may be of independent interest. As an application, we use the deterministic error analysis of the Karcher mean to prove that the distributed PCA algorithm, LRC-dPCA, achieves the same performance as the full sample PCA. Motivated by the established theory, we propose a manifold selection procedure for the LRC-dPCA algorithm. Finally, we carry out three synthetic numerical experiments to verify our theories. One observation in the experiment is that if data model has certain geometric structure, then it is better to utilize the geometry-aware method.

Several interesting topics are worth of future studies. 
In manifold-valued data analysis \citep{patrangenaru2016nonparametric}, it remains to determine which statistical model is more suitable for the given data. 
For example, the highly anisotropic diffusion tensor images are modelled as PSD matrices \citep{Bonnabel13}, so it is interesting to investigate the performances of the proposed intrinsic model  for such data. 
Second, it is interesting  to extend our study to regression, classification, and clustering problems.

\bibliographystyle{apalike}
\bibliography{references}

\newpage
\onecolumn
\section*{APPENDIX}
\appendix




\section{Proof of Theorem \ref{thm:3.1}}
\begin{proof}[Proof of Theorem \ref{thm:3.1}]
	Recall that the Karcher mean $\tilde\bA$ of $\{\bA^m\}_{m=1}^M$ under the intrinsic model is given by \eqref{equ:3.1}. First, we give an upper bound on the Frobenius norm of $\frac{1}{M}\sum_{m=1}^M\bE^m$. By the intrinsic model, we know $\frac{1}{M}\sum_{m=1}^M\bE^m$ is a mock lower triangular matrix with lower triangular elements $\textnormal{i.i.d.}$ $\cN(0,\sigma^2/M)$. Therefore, by the concentration of $\chi^2$ ((2.19) in \cite{Wainwright19}), for all $t\in(0,1)$, we have
	\#
	\norm{\frac{1}{M}\sum_{m=1}^M\bE^m}_{\rF}^2\leq\frac{pK\sigma^2}{M}(1+t),
	\#
	with probability at least $1-e^{-pKt^2/8}$. In a similar spirit, using union bound, we have for $t\in(0,1)$,
	\#
	\max_{i=1,\ldots,K}|\frac{1}{M}\sum_{m=1}^M\bE^m_{ii}|^2\leq\frac{\sigma^2}{M}(1+t)
	\#
	with probability at least $1-Ke^{-t^2/8}$. Here $\bE_{ii}^m$ is the $(i,i)$-th element of $\bE^m$. Thus with high probability, $\max_{i=1,\ldots,K}|\frac{1}{M}\sum_{m=1}^M\bE^m_{ii}|^2\leq1/2$ and 
	\#
	|\exp(\frac{1}{M}\sum_{m=1}^M\bE^m_{ii})-1|\leq2|\frac{1}{M}\sum_{m=1}^M\bE^m_{ii}|,
	\#
	where we use the inequality $|\exp(x)-1|\leq2|x|$ for $x\leq1/2$.
	Denote by $\bN$ the reduced Cholesky factor of $\bA$. Then it holds that $\norm{\bN}_2=\norm{\bA}_2^{1/2}\leq C^{1/2}$ for some constant $C>0$. Furthermore, by \eqref{equ:3.1}, we have for some constants $c_1,c_2>0$ that
	\#
	\norm{\tilde\bA-\bA}_{F}\leq \sqrt{\frac{c_2pK\sigma^2}{M}},
	\#
	with probability at least $1-e^{-c_1pK}$.
\end{proof}

\section{Proof of Lemma \ref{lma:3.2}}\label{apx:2}
\begin{proof}[Proof of Lemma \ref{lma:3.2}]
	The proof of this lemma is split up into three steps. First, we assume $\bQ=\bI_K$ and show that $\widecheck\bQ$ has the form of $\bI_K+\widecheck\bP+\cO_{\max}(\epsilon_0^2)$, where $\widecheck\bP\in\RR^{K\times K}$ is a skew-symmetric matrix of order $\cO_{\max}(\epsilon_0)$. Second, by taking upper triangular off-diagonal elements of $(\bR+\bE)\widecheck\bQ^\top$ as zero, we derive a closed-form expression of $\widecheck\bP$ (up to a higher-order term). Motivated by this closed-form expression, we define a function $f_{\bR}:\RR^{K\times K}\mapsto\RR^{K\times K}$ satisfying several desired conditions. For example, we have $\widecheck\bQ=\bI_K+f_{\bR}(\bE)+\cO_{\max}(\epsilon_0^2)$ and $f_{\bR}$ is linear in its argument. Third, we extend the results to the general case when $\bQ\in\cO_{K\times K}$ may differ from $\bI_K$.
	
	\textit{Step 1.} When $\epsilon_0=\norm{\bE}_{\max}$ is sufficiently small, the matrix $\bR+\bE$ is still non-singular and by QR decomposition there exists a unique orthogonal matrix $\widecheck\bQ\in\cO_{K\times K}$ such that $\widecheck\bR=(\bR+\bE)\widecheck\bQ^\top$ is a lower triangular matrix with positive diagonal elements. In this step, we will show that $\widecheck\bQ$ has a form of $\bI_K+\widecheck\bP+\cO_{\max}(\epsilon_0^2)$, where $\widecheck\bP\in\RR^{K\times K}$ is a skew-symmetric matrix of order $\cO_{\max}(\epsilon_0)$. To that end, we construct $\widecheck\bQ$ as a product of $K(K-1)/2$ rotation matrices $\{\widecheck\bQ^{ij},1\leq i<j\leq K\}$, which set the upper triangular off-diagonal elements as zero in a sequential fashion. In specific, we arrange these $K(K-1)/2$ rotation matrices in a prescribed order, i.e., $\{(1,2),\ldots,(1,K),(2,3),\ldots,(K-1,K)\}$. In this way, we may relabel $\{\widecheck\bQ^{ij},1\leq i<j\leq K\}$ as $\{\widecheck\bQ^{(s)},1\leq s\leq K(K-1)/2\}$ and write $\widecheck\bQ=\widecheck\bQ^{(K(K-1)/2)}\cdots\widecheck\bQ^{(1)}$. In the remainder of this proof, we will use $s(i,j)$ to represent the s-index of the $(i,j)$th rotation matrix $\widecheck\bQ^{ij}$.
	
	For each $(i,j)$, we set the rotation matrix $\widecheck\bQ^{ij}$ as
	\$
	\widecheck\bQ^{ij}_{ii}=\widecheck\bQ^{ij}_{jj}=\cos(\theta^{ij}),\quad \widecheck\bQ^{ij}_{ij}=-\widecheck\bQ^{ij}_{ji}=\sin(\theta^{ij}),\quad \widecheck\bQ^{ij}_{kk}=1,\ \forall k\neq i,j,\quad \widecheck\bQ^{ij}_{kl}=0,\ \textnormal{otherwise},
	\$
	where $\theta^{ij}$ is chosen in a sequential fashion such that the $(i,j)$th element of $\widecheck\bR^{(s(i,j))}\coloneqq(\bR+\bE)\widecheck\bQ^{(1)^\top}\cdots\widecheck\bQ^{(s(i,j))^\top}$ is zero and the diagonal elements of $\widecheck\bR^{(s(i,j))}$ keep positive. Note that $\widecheck\bQ^{ij}$ is by definition an orthogonal matrix. A simple calculation gives that $\theta^{ij}=\theta^{(s(i,j))}=\arctan(\widecheck\bR^{(s(i,j)-1)}_{ij}/\widecheck\bR^{(s(i,j)-1)}_{ii})$. 
	
	Next, we show that $\theta^{ij}$ is a small quantity of order $\cO(\epsilon_0)$ via an deductive argument. First, when $\epsilon_0=\norm{\bE}_{\max}$ is sufficiently small, $\theta^{(1)}=\theta^{12}=\arctan(\bE_{12}/(\bR_{11}+\bE_{11}))$ is a small quantity of order $\cO(\epsilon_0)$. Thus, by definition of $\widecheck\bQ^{12}$, we have $\widecheck\bQ^{(1)}=\widecheck\bQ^{12}=\bI_K+\cO_{\max}(\epsilon_0)$ and
	\$
	\widecheck\bR^{(1)}=(\bR+\bE)(\bI_K+\widecheck\bQ^{(1)^\top}-\bI_K)=\bR+\bE+\bR(\widecheck\bQ^{(1)^\top}-\bI_K)+\cO_{\max}(\epsilon_0^2).
	\$
	Note that the error matrix $\bE^{(1)}\coloneqq\widecheck\bR^{(1)}-\bR$ is again of order $\cO_{\max}(\epsilon_0)$. This implies that $\theta^{(2)}=\theta^{13}=\arctan(\bE^{(1)}_{13}/(\bR_{11}+\bE^{(1)}_{11}))$ is also a small quantity of order $\cO(\epsilon_0)$. Applying this deductive argument $K(K-1)/2$ times, we conclude that all $\theta^{(s)}$, $1\leq s\leq K(K-1)/2$, are small quantities of order $\cO(\epsilon_0)$.
	
	Now we are able to show that $\widecheck\bQ$ has a form of $\bI_K+\widecheck\bP+\cO_{\max}(\epsilon_0^2)$, where $\widecheck\bP\in\RR^{K\times K}$ is a skew-symmetric matrix of order $\cO_{\max}(\epsilon_0)$. Since $\theta^{ij}$ is of order $\cO(\epsilon_0)$, by Taylor expansion, we have $\sin(\theta^{ij})=\theta^{ij}+\cO(\epsilon_0^3)$ and $1-\cos(\theta^{ij})=\cO(\epsilon_0^2)$. Thus, we can rewrite $\widecheck\bQ^{ij}$ as
	\$
	\widecheck\bQ^{ij}=\bI_K+\widecheck\bP^{ij}+\cO_{\max}(\epsilon_0^2),
	\$
	where $\widecheck\bP^{ij}_{ij}=-\widecheck\bP^{ij}_{ij}=\theta^{ij}$ and $\widecheck\bP^{ij}_{kl}=0$ otherwise. Since $\widecheck\bP^{ij}$ is of order $\cO_{\max}(\epsilon_0)$, we have
	\#\label{equ:a1}
	\widecheck\bQ&=\widecheck\bQ^{(K(K-1)/2)}\cdots\widecheck\bQ^{(1)}\notag\\
	&=(\bI_K+\widecheck\bP^{(K(K-1)/2)}+\cO_{\max}(\epsilon_0^2))\cdots(\bI_K+\widecheck\bP^{(1)}+\cO_{\max}(\epsilon_0^2))\notag\\
	&=\bI_K+\sum_{s=1}^{K(K-1)/2}\widecheck\bP^{(s)}+\cO_{\max}(\epsilon_0^2)\notag\\
	&=\bI_K+\widecheck\bP+\cO_{\max}(\epsilon_0^2),
	\#
	where $\widecheck\bP^{(s(i,j))}=\widecheck\bP^{ij}$ and $\widecheck\bP=\sum_{s=1}^{K(K-1)/2}\widecheck\bP^{(s)}$. Since $\widecheck\bP^{(s)}$ is skew-symmetric and of order $\cO_{\max}(\epsilon_0)$ for all $s$, $\widecheck\bP$ is also a skew-symmetric matrix of order $\cO_{\max}(\epsilon_0)$, which concludes the proof of step 1.
	
	\textit{Step 2.} Now we are ready to derive a closed-form expression of $\widecheck\bP$ (maybe up to a higher-order term) by taking upper triangular off-diagonal elements of $(\bR+\bE)\widecheck\bQ^\top$ as zero. Substituting \eqref{equ:a1} into $(\bR+\bE)\widecheck\bQ^\top$, we obtain
	\#\label{equ:a2}
	(\bR+\bE)\widecheck\bQ^\top&=\bR+\bE+\bR\widecheck\bP^\top+\cO_{\max}(\epsilon_0^2)\notag\\
	&=\bR+\bE-\bR\widecheck\bP+\cO_{\max}(\epsilon_0^2),
	\#
	where the second equality follows from the skew-symmetry of $\widecheck\bP$. Since $\bR$ is a lower triangular matrix with positive diagonal elements, $\bR$ is invertible and $\bR^{-1}$ is also a lower triangular matrix. As a result, the matrix $\bR^{-1}(\bR+\bE)\widecheck\bQ^\top$ is also a lower triangular matrix. Multiplying LHS and RHS of \eqref{equ:a2} by $\bR^{-1}$ simultaneously, we obtain
	\#\label{equ:a3}
	\bR^{-1}(\bR+\bE)\widecheck\bQ^\top=\bI_K+\bR^{-1}\bE-\widecheck\bP+\cO_{\max}(\epsilon_0^2).
	\#
	For convenience, we define a function $\cU(\cdot):\RR^{K\times K}\mapsto\RR^{K\times K},\bP\mapsto\cU(\bP)$, where $\cU(\bP)$ takes the upper triangular off-diagonal elements of $\bP$, i.e.,
	\$
	\cU(\bP)_{ij}=\bP_{ij},\ i<j,\quad\cU(\bP)_{ij}=0,\ \textnormal{otherwise}.
	\$
	Since $\bR^{-1}(\bR+\bE)\widecheck\bQ^\top$ is a lower triangular matrix, we have by \eqref{equ:a3} that
	\$
	\cU(\widecheck\bP)=\cU(\bR^{-1}\bE)+\cO_{\max}(\epsilon_0^2).
	\$
	Since $\widecheck\bP$ is skew-symmetric, we get the following closed-form solution of $\widecheck\bP$ (up to a higher-order term),
	\$
	\widecheck\bP=\cU(\bR^{-1}\bE)-(\cU(\bR^{-1}\bE))^\top+\cO_{\max}(\epsilon_0^2).
	\$
	Motivated by the linear expansion of $\widecheck\bP$, we define the following function,
	\$
	f_{\bR}:\RR^{K\times K}\mapsto\RR^{K\times K},\quad\bE\mapsto f_{\bR}(\bE)\coloneqq\cU(\bR^{-1}\bE)-(\cU(\bR^{-1}\bE))^\top.
	\$
	Note that $f_{\bR}$ is linear in the sense that $f_{\bR}(a\bE+b\bF)=af_{\bR}(\bE)+bf_{\bR}(\bF)$ for all $a,b\in\RR$ and $\bE,\bF\in\RR^{K\times K}$. Also, the image $f_{\bR}(\bE)$ is a skew-symmetric matrix, i.e., $(f_{\bR}(\bE))^\top=-f_{\bR}(\bE)$. Moreover, we have $\norm{f_{\bR}(\bE)}_{\rF}\leq \sqrt{2}\norm{\bR^{-1}}_2\norm{\bE}_{\rF}$. By \eqref{equ:a1}, we can rewrite $\widecheck\bQ$ as follows, 
	\#\label{equ:a4}
	\widecheck\bQ=\bI_K+f_{\bR}(\bE)+\cO_{\max}(\epsilon_0^2).
	\#
	Moreover, by definition of $\widecheck\bR$, we have
	\$
	\widecheck\bR=(\bR+\bE)\widecheck\bQ^\top=\bR+\bE-\bR f_{\bR}(\bE)+\cO_{\max}(\epsilon_0^2).
	\$
	
	\textit{Step 3.} In general, when $\bQ\in\cO_{K\times K}$ may differ from $\bI_K$, we can transform the QR decomposition $\widecheck\bR\widecheck\bQ=\bR\bQ+\bE$ suitably and apply the results in the previous two steps to prove the lemma. In specific, we have
	\$
	\widecheck\bR\widecheck\bQ\bQ^\top=\bR+\bE\bQ^\top.
	\$
	When $\epsilon_0=\norm{\bE}_{\max}$ is sufficiently small, $\norm{\bE\bQ^\top}_{\max}\leq\sqrt{K}\epsilon_0$ can also be sufficiently small. In addition, $\widecheck\bQ\bQ^\top$ is still an orthogonal matrix that appears in the QR decomposition of $\bR+\bE\bQ^\top$. Therefore, by \eqref{equ:a4}, we have
	\$
	\widecheck\bQ\bQ^\top=\bI_K+f_{\bR}(\bE\bQ^\top)+\cO_{\max}(K\epsilon_0^2).
	\$
	By multiplying both LHS and RHS of this equation by $\bQ$, we obtain that
	\$
	\widecheck\bQ=\bQ+f_{\bR}(\bE\bQ^\top)\bQ+\cO_{\max}(K^{3/2}\epsilon_0^2).
	\$
	In addition, by definition of $\widecheck\bR$, we have
	\$
	\widecheck\bR&=(\bR\bQ+\bE)\widecheck\bQ^\top\\
	&=(\bR+\bE\bQ^\top)\bQ\widecheck\bQ^\top\\
	&=\bR+\bE\bQ^\top-\bR f_{\bR}(\bE\bQ^\top)+\cO_{\max}(\epsilon_0^2),
	\$
	which concludes the proof.
\end{proof}

\section{Proof of Theorem \ref{thm:3.3}}
\begin{proof}[Proof of Theorem \ref{thm:3.3}]
	First, we use Lemma \ref{lma:3.2} to give a first-order perturbation expansion for the reduced Cholesky factor $\bN^m$ of $\bA^m=(\bN+\bE^m)(\bN+\bE^m)^\top$. Define $\bQ^m\in\cO_{K\times K}$ as an orthogonal matrix such that $\bN^m=(\bN+\bE^m)\bQ^{m\top}$, or equivalently, $(\bR+\bE^{1,m})\bQ^{m\top}$ is a lower triangular matrix with positive diagonal elements. By Lemma \ref{lma:3.2}, when $\norm{\bE^{1,m}}_{\max}\leq\epsilon_0$ is sufficiently small, we have
	\$
	\bQ^{m}=\bI_K+f_{\bR}(\bE^{1,m})+\cO_{\max}(\epsilon_0^2),
	\$
	where $f_{\bR}$ is defined in Lemma \ref{lma:3.2}. By definition of $\bQ^m$, we have
	\$
	\bN^m=(\bN+\bE^m)\bQ^{m\top}=\bN+\bE^m-\bN f_{\bR}(\bE^{1,m})+\cO_{\max}(\epsilon_0^2),
	\$
	where we use the property $f_{\bR}(\bE^{1,m})^\top=-f_{\bR}(\bE^{1,m})$. Using this linear perturbation expansion, we are now able to characterize the Karcher mean $\tilde\bA=\tilde\bN\tilde\bN^\top$ (or $\tilde\bN$) of $\{\bA^m\}_{m=1}^M$ (or $\{\bN^m\}_{m=1}^M$) on the manifold $S^*(p,K)$ (or $\cL^*(p,K)$). By the LRC algorithm, i.e., \eqref{equ:2.3}, we have $\tilde\bN$ is equal to $\frac{1}{M}\sum_{m=1}^M\bN^m$ except that the diagonal elements of $\tilde\bN$ are given by
	\$
	\tilde\bN_{ii}=(\prod_{m=1}^M\bN^m_{ii})^{1/M},\quad\forall 1\leq i\leq K.
	\$ 
	However, when $\epsilon_0$ is sufficiently small, $|\bN^m_{ii}-\bN_{ii}|$ is of order $\cO(\epsilon_0)$ and thus 
	\$
	\tilde\bN_{ii}-\frac{1}{M}\sum_{m=1}^M\bN_{ii}=\cO(\epsilon_0^2),\quad\forall 1\leq i\leq K.
	\$
	Therefore, we have
	\$
	\tilde\bN&=\frac{1}{M}\sum_{m=1}^M\bN^m+\cO_{\max}(\epsilon_0^2)\\
	&=\bN+\frac{1}{M}\sum_{m=1}^M\left(\bE^m-\bN f_{\bR}(\bE^{1,m})\right)+\cO_{\max}(\epsilon_0^2)\\
	&=\bN+\frac{1}{M}\sum_{m=1}^M\bE^m-\bN f_{\bR}(\frac{1}{M}\sum_{m=1}^M\bE^{1,m})+\cO_{\max}(\epsilon_0^2),
	\$
	where the last equality follows from the linear property of $f_{\bR}(\cdot)$.
\end{proof}

\section{Proof of Lemma \ref{lma:4.3}}
\begin{proof}[Proof of Lemma \ref{lma:4.3}]
	Since $\bV,\bLambda$ denote the top $K$ eigenvectors and eigenvalues of $\bSigma$, respectively, we have $\bSigma\bV=\bV\bLambda$ and thus $\bN=\bV\bLambda\bQ^*=\bSigma\bV\bQ^*$. Similarly, we have $\hat\bSigma^m\hat\bV^m=\hat\bV^m\hat\bLambda^m$. Since $\hat\bH^m$ and $\bQ^*$ are both orthogonal matrices, we have
	\$
	(\bN+\hat\bE^m)(\bN+\hat\bE^m)^\top&=(\hat\bV^m\hat\bLambda^m\hat\bH^m\bQ^*)(\hat\bV^m\hat\bLambda^m\hat\bH^m\bQ^*)^\top\\
	&=(\hat\bV^m\hat\bLambda^m)(\hat\bV^m\hat\bLambda^m)^\top\\
	&=\hat\bA^m,
	\$
	which concludes our proof.
\end{proof}

\section{Proof of Lemma \ref{lma:4.4}}

\begin{proof}[Proof of Lemma \ref{lma:4.4}]
	The proof of this lemma is based on a first-order expansion of $\hat\bE^m$. Define $\cE^m=\hat\bSigma^m-\bSigma$ and $\epsilon=\max_m\norm{\cE^m}_2/\Delta_K$. When $\epsilon\leq 1/10$, by Lemma \ref{lma:a2}, we have
	\$
	\norm{\hat\bV^m\hat\bH^m-\bV-g(\cE^m\bV)}_{\rF}\leq 9\sqrt{K}\epsilon^2,
	\$
	where 
	\$
	g:\RR^{p\times K}\mapsto\RR^{p\times K},(\bw_1,\ldots,\bw_K)\mapsto(-\bG_1\bw_1,\ldots,-\bG_K\bw_K),
	\$
	with $\bG_j=\sum_{i>K}(\lambda_i-\lambda_j)^{-1}\bv_i\bv_i^\top$ for $j\in[K]$ and $\lambda_i$/$\bv_i$ being the $i$th eigenvalue/eigenvector of $\bSigma$. By definition of $\hat\bE^m$, we have
	\$
	\hat\bE^m&=((\bSigma + \cE^m)(\bV+(\hat\bV^m\hat\bH^m-\bV))-\bSigma\bV)\bQ^*\\
	&=\cE^m\bV\bQ^*+\bSigma(\hat\bV^m\hat\bH^m-\bV)\bQ^*+\cE^m(\hat\bV^m\hat\bH^m-\bV)\bQ^*\\
	&=\cE^m\bV\bQ^*+\bSigma g(\cE^m\bV)\bQ^*+\cO_{\rF}(\epsilon^2).
	\$
	Since $\textnormal{vec}\circ g\circ \textnormal{vec}^{-1}$ is a linear mapping from $\RR^{pK}$ to $\RR^{pK}$, where $\textnormal{vec}:\RR^{p\times K}\mapsto\RR^{pK}$ is the vectorization mapping, the average of $\hat\bE^m$ can be expressed as
	\$
	\frac{1}{M}\sum_{m=1}^M\hat\bE^m=\frac{1}{M}\sum_{m=1}^M\cE^m\bV\bQ^*+\bSigma g(\frac{1}{M}\sum_{m=1}^M\cE^m\bV)\bQ^*+\cO_{\rF}(\epsilon^2).
	\$
	Thus, by the triangular inequality, we have
	\$
	\norm{\frac{1}{M}\sum_{m=1}^M\hat\bE^m}_{\rF}\leq\norm{\frac{1}{M}\sum_{m=1}^M\cE^m\bV\bQ^*}_{\rF}+\norm{\bSigma g(\frac{1}{M}\sum_{m=1}^M\cE^m\bV)\bQ^*}_{\rF}+\cO(\epsilon^2).
	\$
	Since $\norm{\bQ^*}_2=1$ and $\norm{\bV}_{\rF}=\sqrt{K}$, we have 
	\$
	\norm{\frac{1}{M}\sum_{m=1}^M\cE^m\bV\bQ^*}_{\rF}\leq\sqrt{K}\norm{\frac{1}{M}\sum_{m=1}^M\cE^m}_2.
	\$
	In addition, since $\norm{\bSigma\bG_j}_2\leq \Delta_K^{-1}\lambda_K$ for all $j\in[K]$, we have
	\$
	\norm{\bSigma g(\frac{1}{M}\sum_{m=1}^M\cE^m\bV)\bQ^*}_{\rF}&\leq\Delta_K^{-1}\lambda_K\norm{\frac{1}{M}\sum_{m=1}^M\cE^m\bV}_{\rF}\\
	&\leq\Delta_K^{-1}\lambda_K\sqrt{K}\norm{\frac{1}{M}\sum_{m=1}^M\cE^m}_2.
	\$
	Thus, we have
	\$
	\norm{\frac{1}{M}\sum_{m=1}^M\bE^m}_{\rF}\leq C\norm{\frac{1}{M}\sum_{m=1}^M\cE^m}_2+\cO(\epsilon^2)
	\$
	for some constant $C>0$.
\end{proof}	
\section{Proof of Lemma \ref{lma:4.5}}\label{apx:6}
\begin{proof}[Proof of Lemma \ref{lma:4.5}]
	Similar to Lemma \ref{lma:4.4}, the proof of this lemma is also based on the first-order expansion of $\hat\bE^m$. Define $\cE^m=\hat\bSigma^m-\bSigma$ and $\epsilon=\max_m\norm{\cE^m}_2/\Delta_K$. When $\epsilon\leq 1/10$, by Lemma \ref{lma:a2}, we have
	\$
	\norm{\hat\bV^m\hat\bH^m-\bV-g(\cE^m\bV)}_{\rF}\leq 9\sqrt{K}\epsilon^2,
	\$
	where 
	\$
	g:\RR^{p\times K}\mapsto\RR^{p\times K},(\bw_1,\ldots,\bw_K)\mapsto(-\bG_1\bw_1,\ldots,-\bG_K\bw_K),
	\$
	with $\bG_j=\sum_{i>K}(\lambda_i-\lambda_j)^{-1}\bv_i\bv_i^\top$ for $j\in[K]$ and $\lambda_i$/$\bv_i$ being the $i$th eigenvalue/eigenvector of $\bSigma$. By definition of $\hat\bE^m$, we have
	\$
	\hat\bE^m=\cE^m\bV\bQ^*+\bSigma g(\cE^m\bV)\bQ^*+\cO_{\rF}(\epsilon^2).
	\$
	By the triangular inequality and the fact that $\norm{\cdot}_{\max}\leq\norm{\cdot}_{\rF}$, we have
	\$
	\norm{\hat\bE^m}_{\max}&\leq\norm{\cE^m\bV\bQ^*}_{\max}+\norm{\bSigma g(\cE^m\bV)\bQ^*}_{\max}+\cO(\epsilon^2)\\
	&\leq \sqrt{K}\norm{\cE^m\bV}_{\max}+\sqrt{K}\norm{\bSigma g(\cE^m\bV)}_{\max}+\cO(\epsilon^2),
	\$
	where the second inequality follows from the inequality $\norm{\cdot\bQ^*}_{\max}\leq\sqrt{K}\norm{\cdot}_{\max}$. 
	Thus, to bound $\norm{\hat\bE^m}_{\max}$, it suffices to bound $\norm{\cE^m\bV}_{\max}$, $\norm{\bSigma g(\cE^m\bV)}_{\max}$, and $\epsilon^2=(\max_m\norm{\cE^m}_2/\Delta_K)^2$ separately. To give an upper bound on the first two terms, we will need Proposition 1 in \citet{Pilanci15}, which is presented below for reader's convenience.
	
	\begin{proposition}[Proposition 1 in \citet{Pilanci15}]\label{prop:a1}
		\textit{Let $\cbr{\bz_i}_{i=1}^n\subset\RR^p$ be \text{i.i.d.} samples generated from a zero-mean sub-Gaussian distribution with $\Cov(\bz_i)=\bI_p$. Then there exist some universal constants $C_1,C_2>0$ such that for any subset $\cY\subset\SSS^{p-1}$, we have with probability at least $1-e^{-C_2n\delta^2}$,
			\$
			\sup_{\etab\in\cY}\abr{\etab^{\top}\rbr{\frac{\bZ^{\top}\bZ}{n}-\bI_p}\etab}\leq C_1\frac{\WW(\cY)}{\sqrt{n}}+\delta,
			\$
			where $\bZ^\top=\rbr{\bz_1,\ldots,\bz_n}\in\RR^{p\times n}$ and $\WW(\cY)$ is the Gaussian width of the subset $\cY$.  Specifically, $\WW(\cY)$ is defined by
			\$
			\WW(\cY)=\EE[\sup_{\etab\in\cY}\abr{\inner{\bh}{\etab}}],
			\$
			where the expectation is taken on $\bh\in\RR^p$, which is a standard normal random vector.}	
	\end{proposition}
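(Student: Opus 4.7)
The plan is to fix the process $X_{\etab} = \etab^\top(\bZ^\top\bZ/n - \bI_p)\etab = \tfrac{1}{n}\sum_{i=1}^n (\inner{\bz_i}{\etab}^2 - 1)$ and split $\sup_{\etab\in\cY}|X_{\etab}|$ as the sum of its expectation and the centered fluctuation around that expectation. The deterministic mean should match the $C_1\WW(\cY)/\sqrt n$ term, and the fluctuation should contribute the $\delta$ term with sub-Gaussian tail $e^{-C_2 n\delta^2}$.

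For the expectation, I would first apply Gin\'e--Zinn symmetrization to replace $X_{\etab}$ by the Rademacher-weighted process $\tfrac{1}{n}\sum_i \epsilon_i \inner{\bz_i}{\etab}^2$, paying only a universal constant. The polarization identity $\inner{\bz_i}{\etab}^2 - \inner{\bz_i}{\etab'}^2 = \inner{\bz_i}{\etab-\etab'}\inner{\bz_i}{\etab+\etab'}$ exhibits the increments of this process with respect to Euclidean distance on $\cY$ as products of two centered sub-Gaussian scalars, hence of Bernstein (mixed sub-Gaussian/sub-exponential) type. Talagrand's generic chaining for Bernstein increments, together with the majorizing-measure theorem that identifies the $\gamma_2$ functional of a sub-Gaussian process on $\cY$ with the Gaussian width $\WW(\cY)$, then yields $\EE\sup_{\etab}|X_{\etab}| \lesssim \WW(\cY)/\sqrt n + \WW(\cY)^2/n$, whose first term dominates in the regime $\WW(\cY)\lesssim\sqrt n$ that is of interest.

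For the deviation around the mean, the classical Talagrand--Bousquet concentration inequality does not apply verbatim because the summands $\inner{\bz_i}{\etab}^2$ are sub-exponential rather than bounded. Two standard remedies are available: truncate $\bz_i$ at a level $\tau \asymp \sqrt{\log n}$, apply the bounded Talagrand inequality to the truncated process, and absorb the discarded tail via sub-Gaussian concentration and a union bound over an $\epsilon$-net of $\cY$; or invoke Adamczak's unbounded Talagrand-type inequality, which accommodates sub-exponential summands directly. Either route delivers the fluctuation bound $\sup_{\etab}|X_{\etab}| - \EE\sup_{\etab}|X_{\etab}| \le \delta$ with probability at least $1 - e^{-C_2 n\delta^2}$ in the stated range of $\delta$, and combining with the expectation bound yields the proposition. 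The main obstacle is reconciling the sub-exponential tails of $\inner{\bz_i}{\etab}^2-1$ with the sub-Gaussian deviation target: a naive Bernstein-plus-union-bound over a covering of $\cY$ loses a factor logarithmic in the net size (and hence polynomial in the ambient dimension), so genuine chaining is required, and the concentration step must be calibrated to unbounded empirical processes rather than to bounded ones.
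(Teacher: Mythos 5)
This proposition is quoted verbatim from \citet{Pilanci15} and is used in the paper purely as a black box (in the proof of Lemma \ref{lma:4.5}); the paper contains no proof of it, so there is no in-paper argument to compare yours against. On its own terms, your outline is the standard modern route for quadratic empirical processes---symmetrization, polarization to expose mixed sub-Gaussian/sub-exponential increments, generic chaining together with the majorizing-measure identification of the $\gamma_2$ functional with $\WW(\cY)$, and Adamczak-type (or truncate-then-Talagrand) concentration for the unbounded supremum---and this is essentially the same toolkit behind the cited result and the Mendelson--Pajor--Tomczak-Jaegermann line of work.

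Two quantitative points would need to be nailed down before the sketch becomes a proof of the statement as written. First, the chaining bound on the mean is of the form $\WW(\cY)/\sqrt{n}+\WW(\cY)^2/n$, and the quadratic term is not always negligible (e.g.\ $\cY=\SSS^{p-1}$ with $p\gg n$), so the clean conclusion $C_1\WW(\cY)/\sqrt{n}+\delta$ requires either restricting to the regime $\WW(\cY)\lesssim\sqrt{n}$ or absorbing the quadratic term into $\delta$; you flag this but leave it implicit. Second, the summands $\langle\bz_i,\etab\rangle^2-1$ are sub-exponential, so any Bernstein-type deviation bound yields a tail of order $e^{-cn\min(\delta^2,\delta)}$ rather than $e^{-cn\delta^2}$; the stated sub-Gaussian tail therefore only holds for $\delta$ in a bounded range, and the truncation or Adamczak step must be calibrated to deliver it there. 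Neither caveat affects how the proposition is applied in this paper ($\cY$ finite with $\WW(\cY)\lesssim\sqrt{\log p}$ and $\delta$ small), but as it stands your proposal is a correct roadmap whose hard quantitative steps are delegated to heavy cited machinery rather than carried out.
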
	
	
	\textit{1: bound $\norm{\cE^m\bV}_{\max}$.} Denote by $\be_l\in\RR^p$ the basis vector with value 1 at the $l$th entry and 0 at other entries. Then the max norm has the following expression,
	\$
	\norm{\cE^m\bV}_{\max}=\max_{l\in[p],k\in[K]}|\be_l^\top\cE^m\bv_k|.
	\$
	Let $\bz_i^m=\bSigma^{-1/2}\bx_i^m$ and $\bZ^m=(\bz_1^m,\ldots,\bz_n^m)^\top$. Since $\{\bx_i^m\}_{i=1}^n$ are \textnormal{i.i.d.} sub-Gaussian with mean $\zero$ and covariance $\bSigma$, $\{\bz_i^m\}_{i=1}^n$ are \textnormal{i.i.d.} sub-Gaussian with mean $\zero$ and covariance $\bI_p$.  By definition of $\cE^m$, we have
	\$
	|\be_l^\top\cE^m\bv_k|=|(\bSigma^{1/2}\be_l)^\top(\frac{\bZ^{m\top}\bZ^m}{n}-\bI_p)\bSigma^{1/2}\bv_k|.
	\$
	By the polarization equality, we have
	\$
	(\bSigma^{1/2}\be_l)^\top(\frac{\bZ^{m\top}\bZ^m}{n}-\bI_p)\bSigma^{1/2}\bv_k=\frac{1}{2}\Big\{&(\bSigma^{1/2}\be_l+\bSigma^{1/2}\bv_k)^\top(\frac{\bZ^{m\top}\bZ^m}{n}-\bI_p)(\bSigma^{1/2}\be_l+\bSigma^{1/2}\bv_k)\\
	&-(\bSigma^{1/2}\be_l)^\top(\frac{\bZ^{m\top}\bZ^m}{n}-\bI_p)(\bSigma^{1/2}\be_l)\\
	&-(\bSigma^{1/2}\bv_k)^\top(\frac{\bZ^{m\top}\bZ^m}{n}-\bI_p)(\bSigma^{1/2}\bv_k)\Big\}.
	\$
	Then by the triangular inequality, we have
	\$
	\norm{\cE^m\bV}_{\max}\leq\max_{l\in[p],k\in[K]}\frac{1}{2}\Big\{&|(\bSigma^{1/2}\be_l+\bSigma^{1/2}\bv_k)^\top(\frac{\bZ^{m\top}\bZ^m}{n}-\bI_p)(\bSigma^{1/2}\be_l+\bSigma^{1/2}\bv_k)|\\
	&+|(\bSigma^{1/2}\be_l)^\top(\frac{\bZ^{m\top}\bZ^m}{n}-\bI_p)(\bSigma^{1/2}\be_l)|\\
	&+|(\bSigma^{1/2}\bv_k)^\top(\frac{\bZ^{m\top}\bZ^m}{n}-\bI_p)(\bSigma^{1/2}\bv_k)|\Big\}.
	\$
	Since $\norm{\be_l}_2=\norm{\bv_k}_2=1$, we have
	\$
	\norm{\bSigma^{1/2}(\be_l+\bv_k)}_2\leq 2\norm{\bSigma^{1/2}}_2=2\norm{\bSigma}_2^{1/2},\ \norm{\bSigma^{1/2}\be_l}_2\leq\norm{\bSigma}_2^{1/2},\ \norm{\bSigma^{1/2}\bv_k}_2\leq\norm{\bSigma}_2^{1/2},
	\$
	for all $l\in[p]$ and $k\in[K]$. Define $\cY_1\subset\SSS^{p-1}$ as follows,
	\$
	\cY_1=\left\{\frac{\bSigma^{1/2}(\be_l+\bv_k)}{\norm{\bSigma^{1/2}(\be_l+\bv_k)}_2}\right\}_{l\in[p],k\in[K]}\cup\left\{\frac{\bSigma^{1/2}\be_l}{\norm{\bSigma^{1/2}\be_l}_2}\right\}_{l\in[p]}\cup\left\{\frac{\bSigma^{1/2}\bv_k}{\norm{\bSigma^{1/2}\bv_k}_2}\right\}_{k\in[K]},
	\$
	then we have
	\$
	\norm{\cE^m\bV}_{\max}\leq C_1\cdot\sup_{\etab\in\cY_1}\abr{\etab^{\top}\rbr{\frac{\bZ^{m\top}\bZ^m}{n}-\bI_p}\etab},
	\$
	where $C_1>0$ is a constant dependent on $\norm{\bSigma}_2$. We remark here that in the proof notations $C,C_1,C_2$ represent some universal constants, which may vary according to the context. By Proposition \ref{prop:a1}, there exist some universal constants $C_1,C_2>0$ such that the following inequality holds
	\$
	\norm{\cE^m\bV}_{\max}\leq C_1\frac{\WW(\cY_1)}{\sqrt{n}}+\delta,
	\$
	with probability at least $1-e^{-C_2n\delta^2}$. Since $\cY_1$ is a finite set with cardinality $|\cY_1|\leq Cp$ for some constant $C>0$, by the maximal inequality \citep{Mohri18}, the following inquality
	\$
	\WW(\cY_1)\leq C_1\sqrt{\log(p)}
	\$
	holds for some constant $C_1>0$. Thus with probability at least $1-e^{-C_2n\delta^2}$, we have
	\$
	\norm{\cE^m\bV}_{\max}\leq C_1\sqrt{\frac{\log(p)}{n}}+\delta.
	\$
	
	\textit{2: bound $\norm{\bSigma g(\cE^m\bV)}_{\max}$.} The proof of this step is similar to that of step one. By definition of $g$, we have
	\$
	g(\cE^m\bV)=(-\bG_1\cE^m\bv_1,\ldots,-\bG_K\cE^m\bv_K).
	\$
	Then we may write the max norm as
	\$
	\norm{\bSigma g(\cE^m\bV)}_{\max}=\max_{l\in[p],k\in[K]}|\be_l^\top\bSigma\bG_k\cE^m\bv_k|=\max_{l\in[p],k\in[K]}|\be_l^\top\bSigma\bG_k\bSigma^{1/2}(\frac{\bZ^{m\top}\bZ^m}{n}-\bI_p)\bSigma^{1/2}\bv_k|.
	\$
	Similar to step one, by the polarization equality, the triangular inequality, and the fact $\bG_k=\bG_k^\top$, we have
	\$
	\norm{g(\cE^m\bV)}_{\max}\leq\max_{l\in[p],k\in[K]}\frac{1}{2}\Big\{&|(\bSigma^{1/2}\bG_k\bSigma\be_l+\bSigma^{1/2}\bv_k)^\top(\frac{\bZ^{m\top}\bZ^m}{n}-\bI_p)(\bSigma^{1/2}\bG_k\bSigma\be_l+\bSigma^{1/2}\bv_k)|\\
	&+|(\bSigma^{1/2}\bG_k\bSigma\be_l)^\top(\frac{\bZ^{m\top}\bZ^m}{n}-\bI_p)(\bSigma^{1/2}\bG_k\bSigma\be_l)|\\
	&+|(\bSigma^{1/2}\bv_k)^\top(\frac{\bZ^{m\top}\bZ^m}{n}-\bI_p)(\bSigma^{1/2}\bv_k)|\Big\}.
	\$
	By definition of $\bG_k$, we have
	\$
	\bSigma^{1/2}\bG_k\bSigma=\sum_{i>K}(\lambda_i-\lambda_k)^{-1}\lambda_i^{3/2}\bv_i\bv_i^\top,
	\$
	and thus $\norm{\bSigma^{1/2}\bG_k\bSigma}_{2}\leq\lambda_K^{3/2}\Delta_K^{-1}$. Since $\norm{\be_l}_2=\norm{\bv_k}_2=1$, we have
	\$
	\norm{\bSigma^{1/2}\bG_k\bSigma\be_l+\bSigma^{1/2}\bv_k}_2\leq \norm{\bSigma^{1/2}\bG_k\bSigma}_2+\norm{\bSigma^{1/2}}_2\leq \lambda_K^{3/2}\Delta_K^{-1}+\lambda_1^{1/2},\quad \norm{\bSigma^{1/2}\bG_k\bSigma\be_l}_2\leq\lambda_K^{3/2}\Delta_K^{-1}.
	\$
	Define the following set $\cY_2\subset\SSS^{p-1}$,
	\$
	\cY_2 = \Big\{\frac{\bSigma^{1/2}\bG_k\bSigma\be_l+\bSigma^{1/2}\bv_k}{\norm{\bSigma^{1/2}\bG_k\bSigma\be_l+\bSigma^{1/2}\bv_k}_2}\Big\}_{l\in[p],k\in[K]} \cup \Big\{\frac{\bSigma^{1/2}\bG_k\bSigma\be_l}{\norm{\bSigma^{1/2}\bG_k\bSigma\be_l}_2}\Big\}_{l\in[p],k\in[K]} \cup \Big\{\frac{\bSigma^{1/2}\bv_k}{\norm{\bSigma^{1/2}\bv_k}_2}\Big\}_{k\in[K]}.
	\$
	Then we have
	\$
	\norm{\bSigma g(\cE^m\bV)}_{\max}\leq C_1\cdot \sup_{\etab\in\cY_2}\abr{\etab^{\top}\rbr{\frac{\bZ^{m\top}\bZ^m}{n}-\bI_p}\etab}
	\$
	for some constant $C_1>0$ dependent on $\norm{\bSigma}_2$ and $\Delta_K$. Again by Proposition \ref{prop:a1}, we have with probability at least $1-e^{-C_2n\delta^2}$ that
	\$
	\norm{\bSigma g(\cE^m\bV)}_{\max}\leq C_1\frac{\WW(\cY_2)}{\sqrt{n}}+\delta
	\$
	for some universal constants $C_1,C_2>0$. Since $\cY_2$ is a finite set with cardinality $|\cY_2|\leq Cp$ for some constant $C>0$, by the maximal inequality \citep{Mohri18}, we have
	\$
	\WW(\cY_2)\leq C_1\sqrt{\log(p)}
	\$
	for some constant $C_1>0$. Thus with probability at least $1-e^{-C_2n\delta^2}$, we have
	\$
	\norm{\bSigma g(\cE^m\bV)}_{\max}\leq C_1\sqrt{\frac{\log(p)}{n}}+\delta.
	\$
	
	\textit{3: bound $\epsilon^2$.} We will use the tail bound of $\norm{\cE^m}_2$ in Lemma \ref{lma:a1} to bound $\epsilon^2$. By Lemma \ref{lma:a1}, we have with probability at least $1-e^{-C\sqrt{\frac{\delta n}{r}}}$ that
	\$
	\norm{\cE^m}^2_2\leq \delta
	\$
	for some constant $C>0$	and $r=\textnormal{Tr}(\bSigma)/\lambda_1\leq p$. Then by union bound, we have with probability at least $1-Me^{-C\sqrt{\frac{\delta n}{r}}}$ that
	\$
	\epsilon^2=\max_m\norm{\cE^m}_2^2/\Delta_K^2\leq \delta,
	\$
	for some constant $C>0$.
	
	\textit{Last: bound $\max_m\norm{\hat\bE^m}_{\max}$.} We will combine the results from 1 to 3 and apply a union bound to obtain the upper bound on $\max_m\norm{\hat\bE^m}_{\max}$. In specific, by union bound, we have with probability at least $1-2Me^{-C_1n\delta_1^2}-Me^{-C_2\sqrt{\delta_2n/r}}$ that
	\$
	\max_m\norm{\hat\bE^m}_{\max}\leq C_3\sqrt{\frac{\log(p)}{n}}+\delta_1+\delta_2,
	\$
	for some constants $C_1,C_2,C_3>0$. Take $\delta_1=\sqrt{\frac{\log(2Mp)}{C_1n}}$ and $\delta_2=\frac{\log(Mp)^2r}{C_2^2n}$, then we have with probability at least $1-2p^{-1}$ that,
	\$
	\max_m\norm{\hat\bE^m}_{\max}\leq C_1\sqrt{\frac{\log(pM)}{n}}+C_2\frac{\log^2(pM)r}{n},
	\$
	for some constants $C_1,C_2>0$. When $n\gtrsim \log^3(pM)r^2$, with probability at least $1-2p^{-1}$ the following bound
	\$
	\max_m\norm{\hat\bE^m}_{\max}\leq C\sqrt{\frac{\log(pM)}{n}}
	\$
	holds for some constant $C>0$.
\end{proof}

\section{Proof of Theorem \ref{thm:4.6}}\label{apx:7}
\begin{proof}[Proof of Theorem \ref{thm:4.6}]
	We will combine the results in Theorem \ref{thm:3.3}, Lemma \ref{lma:4.3}, Lemma \ref{lma:4.4}, and Lemma \ref{lma:4.5} to prove this theorem. Recall that $\hat\bE^m=\hat\bSigma^m\hat\bV^m\hat\bH^m\bQ^*-\bSigma\bV\bQ^*$, $\cE^m=\hat\bSigma^m-\bSigma$, $\epsilon=\max_m\norm{\cE^m}_2/\Delta_K$, and $\epsilon_0=\max_m\norm{\hat\bE^m}_{\max}$. In addition, we partition $\bN=(\bR^\top\ \bB^\top)^\top$ and $\hat\bE^m=(\hat\bE^{1,m^\top}\ \hat\bE^{2,m^\top})^\top$ such that $\bR,\hat\bE^{1,m}\in\RR^{K\times K}$ and $\bB,\hat\bE^{2,m}\in\RR^{(p-K)\times K}$. By Lemma \ref{lma:4.3}, we have $\hat\bA^m=(\bN+\hat\bE^m)(\bN+\hat\bE^m)^\top$ for all $m\in[M]$, where $\bN$ is the reduced Cholesky factor of $\bA=\bV\bLambda^2\bV^\top$. By Lemma \ref{lma:4.5}, $\epsilon_0=\max_m\norm{\hat\bE^m}_{\max}$ is sufficiently small with high probability when $n$ is sufficiently large. Thus, we can apply Theorem \ref{thm:3.3} to the LRC-dPCA algorithm to obtain the desired result. In specific, by Theorem \ref{thm:3.3}, we have
	\$
	\tilde\bN=\bN+\frac{1}{M}\sum_{m=1}^M\hat\bE^m-\bN f_{\bR}(\frac{1}{M}\sum_{m=1}^M\hat\bE^{1,m})+\cO_{\max}(\epsilon_0^2),
	\$
	where $f_{\bR}$ is defined in Lemma \ref{lma:3.2}. By the triangular inequality, we have
	\$
	\norm{\tilde\bN-\bN}_{\rF}&\leq\norm{\frac{1}{M}\sum_{m=1}^M\hat\bE^m}_{\rF}+\norm{\bN f_{\bR}(\frac{1}{M}\sum_{m=1}^M\hat\bE^{1,m})}_{\rF}+\cO(\sqrt{pK}\epsilon_0^2)\\
	&\leq\norm{\frac{1}{M}\sum_{m=1}^M\hat\bE^m}_{\rF}+\sqrt{2}\norm{\bN}_2\norm{\bR^{-1}}_2\norm{\frac{1}{M}\sum_{m=1}^M\hat\bE^{1,m}}_{\rF}+\cO(\sqrt{pK}\epsilon_0^2),
	\$
	where the second inequality is due to the property $\norm{f_{\bR}(\cdot)}_{\rF}\leq\sqrt{2}\norm{\bR^{-1}}_2\norm{\cdot}_{\rF}$. By Lemma \ref{lma:4.4}, we have
	\#\label{equ:a5}
	\norm{\tilde\bN-\bN}_{\rF}\leq C\norm{\frac{1}{M}\sum_{m=1}^M\cE^m}_2+\cO(\epsilon^2)+\cO(\sqrt{p}\epsilon_0^2),
	\#
	for some constant $C>0$. 
	
	Next, we give a high probability bound on $\norm{\tilde\bN-\bN}_{\rF}$ in three steps. First, by Lemma \ref{lma:a1}, we have with probability at least $1-e^{-\frac{\delta_1}{C_1\lambda_1\sqrt{r/
				(Mn)}}}$ that
	\$
	\norm{\frac{1}{M}\sum_{m=1}^M\cE^m}_2\leq \delta_1,
	\$
	for some constant $C_1>0$ and $r=\textnormal{Tr}(\bSigma)/\lambda_1(\bSigma)$. Second, by Lemma \ref{lma:a1} and the union bound, we have with probability at least $1-Me^{-\frac{\delta_2}{C_2\lambda_1\sqrt{r/n}}}$ that
	\$
	\epsilon\leq \delta_2/\Delta_K,
	\$
	for some constant $C_2>0$. Third, by Lemma \ref{lma:4.5}, we have with probability at least $1-2Me^{-C_3n\delta_3^2}-Me^{-C_4\sqrt{\delta_4n/r}}$ that
	\$
	\epsilon_0=\max_m\norm{\hat\bE^m}_{\max}\leq C_5\sqrt{\frac{\log(p)}{n}}+\delta_3+\delta_4,
	\$
	for some constants $C_3,C_4,C_5>0$. By the union bound, we combine these three high probability bounds with \eqref{equ:a5} to obtain the desired result. In specific, with probability at least $1-e^{-\frac{\delta_1}{C_1\lambda_1\sqrt{r/(Mn)}}}-Me^{-\frac{\delta_2}{C_2\lambda_1\sqrt{r/n}}}-2Me^{-C_3n\delta_3^2}-Me^{-C_4\sqrt{\delta_4n/r}}$, the following inequality
	\$
	\norm{\tilde\bN-\bN}_{\rF}\leq \cO(\delta_1)+\cO(\delta_2^2)+\cO(\frac{\sqrt{p}\log(p)}{n})+\cO(\sqrt{p}\delta_3^2)+\cO(\sqrt{p}\delta_4^2),
	\$
	holds for some constants $C_1,C_2,C_3,C_4>0$. Take $\delta_1=C_1\lambda_1\log(p)\sqrt{r/(Mn)}$, $\delta_2=C_2\lambda_1\log(pM)\sqrt{r/n}$, $\delta_3=\sqrt{\frac{\log(2pM)}{C_3n}}$, and $\delta_4=\frac{\log^2(pM)r}{C_4^2n}$, then we have with probability at least $1-4p^{-1}$ that
	\$
	\norm{\tilde\bN-\bN}_{\rF}&\leq \cO(\frac{\log(p)\sqrt{r}}{\sqrt{Mn}})+\cO(\frac{\log^2(pM)r}{n})+\cO(\frac{\sqrt{p}\log(pM)}{n})+\cO(\frac{\sqrt{p}\log^4(pM)r^2}{n^2})\\
	&\leq \cO(\frac{\log(p)\sqrt{r}}{\sqrt{Mn}})+\cO(\frac{(\log^2(pM)r)\vee(\log(pM)\sqrt{p})}{n})+\cO(\frac{\sqrt{p}\log^4(pM)r^2}{n^2}).
	\$
	When $n\gtrsim (\log^2(pM)\sqrt{p}r)\vee(\log^3(pM)r^2)$, we have with probability at least $1 - 4p^{-1}$ that
	\$
	\norm{\tilde\bN-\bN}_{\rF}\leq\cO(\frac{\log(p)\sqrt{r}}{\sqrt{Mn}})+\cO(\frac{(\log^2(pM)r)\vee(\log(pM)\sqrt{p})}{n}).
	\$
	In addition, when $n\gtrsim \frac{M((\log^4(pM)r^2)\vee(\log^2(pM)p))}{\log^2(p)r}$, we have with probability at least $1-4p^{-1}$ that
	\$
	\norm{\tilde\bN-\bN}_{\rF}\leq\cO(\frac{\log(p)\sqrt{r}}{\sqrt{Mn}}),
	\$
	which concludes the proof.
\end{proof}	

\section{Auxiliary Lemmas}\label{apx:lma}
The following lemma gives a tail bound of $\norm{\hat\bSigma-\bSigma}_2$ in the sub-Gaussian case.

\begin{lemma}[Lemma 3 in \citet{Fan19}]\label{lma:a1}
	\textit{Suppose $\{\bx_i\}_{i=1}^n\subset\RR^p$ are $\text{i.i.d.}$ sub-Gaussian with mean $\zero$ and covariance $\bSigma$. Let $\hat\bSigma=\frac{1}{n}\sum_{i=1}^n\bx_i\bx_i^\top$ be the sample covariance matrix, $\{\lambda_j\}_{j=1}^p$ be the eigenvalues of $\bSigma$ sorted in descending order, and $r=\textnormal{Tr}(\bSigma)/\lambda_1$. There exist constants $C_1\geq 1$ and $C_2\geq 0$ such that when $n\geq r$, we have
		\$
		\PP(\norm{\hat\bSigma-\bSigma}_2\geq s)\leq \textnormal{exp}\left(-\frac{s}{C_1\lambda_1\sqrt{r/n}}\right),\forall s\geq 0,
		\$
		and $\norm{\norm{\hat\bSigma-\bSigma}_2}_{\psi_1}\leq C_2\lambda_1\sqrt{r/n}$.}	
\end{lemma}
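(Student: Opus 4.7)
The plan is to bound $\|\hat\bSigma-\bSigma\|_2$ via its variational representation, combined with a single-direction sub-exponential tail estimate and a chaining argument that brings out the effective rank $r=\textnormal{Tr}(\bSigma)/\lambda_1$ rather than the ambient dimension $p$. First, I whiten by writing $\bx_i=\bSigma^{1/2}\bz_i$, where $\{\bz_i\}$ are i.i.d.\ sub-Gaussian with identity covariance; this is legitimate because sub-Gaussianity is preserved by linear transformations and the definition in the paper is coordinate-free. Setting $v_u=\bSigma^{1/2}u$ for $u\in\SSS^{p-1}$, one has
\[
\|\hat\bSigma-\bSigma\|_2 \;=\; \sup_{u\in\SSS^{p-1}}\Big|\frac{1}{n}\sum_{i=1}^n \big((v_u^\top\bz_i)^2-\|v_u\|_2^2\big)\Big|.
\]

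For each fixed $u$, the summands are centered sub-exponential with Orlicz parameter of order $\|v_u\|_2^2=u^\top\bSigma u\leq\lambda_1$, so Bernstein's inequality yields a pointwise tail of the form $\PP(|u^\top(\hat\bSigma-\bSigma)u|\geq t)\leq 2\exp(-c\min(nt^2/\lambda_1^2,\,nt/\lambda_1))$. To upgrade to the supremum over $\SSS^{p-1}$, I would apply generic chaining with the pseudometric $d(u,u')=\|\bSigma^{1/2}(u-u')\|_2$. The key point is that the Talagrand $\gamma_2$ and $\gamma_1$ functionals of $\{\bSigma^{1/2}u:u\in\SSS^{p-1}\}$ are controlled by $\sqrt{\textnormal{Tr}(\bSigma^2)}\leq \lambda_1\sqrt{r}$ and $\lambda_1$, respectively, rather than by the much weaker $\sqrt{p\lambda_1}$. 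Assembling the chaining bound gives, with probability at least $1-e^{-t}$,
\[
\|\hat\bSigma-\bSigma\|_2 \;\lesssim\; \lambda_1\Big(\sqrt{r/n}+r/n\Big) + \lambda_1\Big(\sqrt{t/n}+t/n\Big).
\]
Under the standing hypothesis $n\geq r$, the linear-in-$r/n$ and linear-in-$t/n$ pieces are dominated by their square-root counterparts, and re-parametrizing $t\mapsto s$ yields the stated exponential tail $\PP(\|\hat\bSigma-\bSigma\|_2\geq s)\leq \exp(-s/(C_1\lambda_1\sqrt{r/n}))$.

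The $\psi_1$-norm bound then follows by integration. Writing $Y=\|\hat\bSigma-\bSigma\|_2$ and $\mu=C_1\lambda_1\sqrt{r/n}$, the tail estimate $\PP(Y\geq s)\leq e^{-s/\mu}$ gives $\EE|Y|^p\leq \mu^p\,\Gamma(p+1)=\mu^p\, p!$ via the layer-cake formula. Therefore $(\EE|Y|^p)^{1/p}/p\leq \mu\cdot(p!)^{1/p}/p\lesssim \mu$ uniformly in $p\geq 1$ (by Stirling), delivering $\|Y\|_{\psi_1}\lesssim \lambda_1\sqrt{r/n}$ as claimed.

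The main obstacle I expect is executing the chaining in Step 2 carefully so that the \emph{effective} rank $r$ appears in place of the ambient dimension $p$; a naive $\varepsilon$-net over $\SSS^{p-1}$ produces a $\sqrt{p/n}$ rate and misses the point. A cleaner alternative is to invoke Koltchinskii--Lounici's expectation bound $\EE\|\hat\bSigma-\bSigma\|_2\lesssim \lambda_1(\sqrt{r/n}+r/n)$ for sub-Gaussian vectors directly, and then apply Talagrand's concentration inequality for suprema of empirical processes to upgrade the expectation bound to a tail bound; this packages the chaining into a black box but ultimately relies on the same mechanism.
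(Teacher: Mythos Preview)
The paper does not prove this lemma at all: it is quoted verbatim as ``Lemma~3 in \citet{Fan19}'' in the auxiliary-lemmas appendix and used as a black box. So there is no paper proof to compare against; your sketch is supplying an argument where the authors simply cite one.

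Your approach is the standard route, and in particular the ``cleaner alternative'' you name at the end---invoke the Koltchinskii--Lounici expectation bound $\EE\|\hat\bSigma-\bSigma\|_2\lesssim\lambda_1(\sqrt{r/n}+r/n)$ and upgrade it to a deviation bound via Talagrand/Adamczak concentration for empirical processes---is exactly how \citet{Fan19} prove their Lemma~3. Your whitening and single-direction Bernstein are correct, and the integration step for the $\psi_1$ bound is fine. Two small points are worth tightening. First, the $\gamma_2$ functional of the ellipsoid $\{\bSigma^{1/2}u:u\in\SSS^{p-1}\}$ under the Euclidean metric is of order $\sqrt{\textnormal{Tr}(\bSigma)}=\sqrt{\lambda_1 r}$, not $\sqrt{\textnormal{Tr}(\bSigma^2)}$; the latter inequality you wrote is true but is not the quantity that appears in the chaining. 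Second, your claim that ``the linear-in-$t/n$ piece is dominated by its square-root counterpart'' is only valid for $t\leq n$; for $t>n$ the $t/n$ term dominates and gives the genuinely sub-exponential (rather than sub-Gaussian) portion of the tail. This is not a fatal gap---it is precisely this regime that produces the stated $e^{-s/\mu}$ tail---but the sentence as written over-simplifies the case analysis needed to land on a pure exponential bound valid for all $s\geq 0$.
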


The following lemma provides a first-order expansion of $\hat\bV\hat\bH$ around $\bV$, where $\hat\bH=\argmin_{\bO\in\cO_{K\times K}}\norm{\hat\bV\bO-\bV}_{\rF}$.  

\begin{lemma}[Lemma 8 in \citet{Fan19}]\label{lma:a2}
	\textit{Let $\bSigma,\hat\bSigma\in\RR^{p\times p}$ be symmetric matrices with eigenvalues $\{\lambda_i\}_{i=1}^p$ and $\{\hat\lambda_i\}_{i=1}^p$ (in descending order) and eigenvectors $\{\bv_j\}_{j=1}^p,\{\hat\bv_j\}_{j=1}^p$ such that $\bSigma\bv_j=\lambda_j\bv_j$ and $\hat\bSigma\hat\bv_j=\hat\lambda_j\hat\bv_j$ for $j\in[p]$. Define $\cE=\hat\bSigma-\bSigma$, $S=\{s+1,\ldots,s+K\}$ for some fixed $s\in\{0,1,\ldots,p-K\}$, $\bG_j=\sum_{i\notin S}(\lambda_i-\lambda_{j})^{-1}\bv_i\bv_i^\top$ for $j\in[K]$, and
		\$
		g:\RR^{p\times K}\mapsto\RR^{p\times K},(\bw_1,\ldots,\bw_K)\mapsto(-\bG_1\bw_1,\ldots,-\bG_K\bw_K).
		\$
		Let $\bV=(\bv_{s+1},\ldots,\bv_{s+K})$, $\hat\bV=(\hat\bv_{s+1},\ldots,\hat\bv_{s+K})$, $\bH=\hat\bV^\top\bV$, and $\hat\bH=\textnormal{sgn}(\bH)\coloneqq \bU_1\bU_2^\top$, where $\bH=\bU_1\bGamma\bU_2^\top$ is the unique singular value decomposition of $\bH$. If $\Delta =\min\{\lambda_s-\lambda_{s+1},\lambda_{s+K}-\lambda_{s+K+1}\}>0$ and $\epsilon=\norm{\cE}_2/\Delta\leq 1/10$, where $\lambda_0=\infty$ and $\lambda_{p+1}=-\infty$, we have
		\$
		\norm{\hat\bV\hat\bH-\bV-g(\cE\bV)}_{\rF}\leq 9\epsilon\norm{g(\cE\bV)}_{\rF}.
		\$}
\end{lemma}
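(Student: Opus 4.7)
The strategy is to split $\hat\bV\hat\bH - \bV - g(\cE\bV)$ into an in-subspace component (in $\textnormal{span}(\bV)$) and an out-of-subspace component (in $\textnormal{span}(\bV)^\perp$), since the target $\bV + g(\cE\bV)$ naturally decomposes this way, and then to bound each component by a constant times $\epsilon\norm{g(\cE\bV)}_{\rF}$. Writing $\hat\bV = \bV\bV^\top\hat\bV + (\bI-\bV\bV^\top)\hat\bV = \bV\bH^\top + \bM$ with $\bM = (\bI-\bV\bV^\top)\hat\bV$ yields $\hat\bV\hat\bH - \bV = \bV(\bH^\top\hat\bH - \bI) + \bM\hat\bH$, and by orthogonality of the two ranges,
\[
\norm{\hat\bV\hat\bH - \bV - g(\cE\bV)}_{\rF}^2 = \norm{\bV(\bH^\top\hat\bH - \bI_K)}_{\rF}^2 + \norm{\bM\hat\bH - g(\cE\bV)}_{\rF}^2.
\]

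For the out-of-subspace piece, I would project the eigenvalue equation $(\bSigma + \cE)\hat\bv_{s+j} = \hat\lambda_{s+j}\hat\bv_{s+j}$ against each $\bv_i$ with $i \notin S$ to obtain the exact identity $\bv_i^\top\hat\bv_{s+j} = (\bv_i^\top\cE\hat\bv_{s+j})/(\hat\lambda_{s+j} - \lambda_i)$. Thus the $j$-th column of $\bM$ equals $\tilde\bG_j(\hat\lambda_{s+j})\cE\hat\bv_{s+j}$ with resolvent $\tilde\bG_j(z) = \sum_{i \notin S}(z - \lambda_i)^{-1}\bv_i\bv_i^\top$, while the $j$-th column of $g(\cE\bV)$ equals $\tilde\bG_j(\lambda_{s+j})\cE\bv_{s+j}$. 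Subtracting and splitting gives
\[
\bM_{\cdot j} - g(\cE\bV)_{\cdot j} = \tilde\bG_j(\hat\lambda_{s+j})\cE(\hat\bv_{s+j} - \bv_{s+j}) + [\tilde\bG_j(\hat\lambda_{s+j}) - \tilde\bG_j(\lambda_{s+j})]\cE\bv_{s+j}.
\]
Weyl's inequality yields $|\hat\lambda_{s+j} - \lambda_{s+j}| \leq \Delta\epsilon$, hence $\norm{\tilde\bG_j(\hat\lambda_{s+j})}_2 \leq 1/[(1-\epsilon)\Delta]$ and $\norm{\tilde\bG_j(\hat\lambda_{s+j}) - \tilde\bG_j(\lambda_{s+j})}_2 \leq \epsilon/[(1-\epsilon)^2\Delta]$. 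Combined with the Davis--Kahan bound $\norm{\hat\bv_{s+j} - \bv_{s+j}}_2 = O(\epsilon)$, both pieces are controlled by a multiple of $\epsilon\norm{\cE\bv_{s+j}}_2/\Delta$, which after invoking the identity $g(\cE\bV)_{\cdot j} = \tilde\bG_j(\lambda_{s+j})\cE\bv_{s+j}$ is comparable to $\epsilon\norm{g(\cE\bV)_{\cdot j}}_2$ up to contributions from the part of $\cE\bv_{s+j}$ in $\textnormal{span}(\bV)$, which I would absorb using that $g$ zeroes precisely that component.

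For the in-subspace piece, the SVDs $\bH = \bU_1\bGamma\bU_2^\top$ and $\hat\bH = \bU_1\bU_2^\top$ give the product identity $\bH^\top\hat\bH = (\bH^\top\bH)^{1/2}$. Since $\bH^\top\bH = \bV^\top\hat\bV\hat\bV^\top\bV = \bI_K - \sin^2\bTheta$ with $\sin\bTheta$ the principal-angle matrix between $\textnormal{span}(\bV)$ and $\textnormal{span}(\hat\bV)$, a matrix Taylor expansion yields $\bH^\top\hat\bH - \bI_K = -\tfrac{1}{2}\sin^2\bTheta + O(\norm{\sin\bTheta}^4)$, and hence $\norm{\bV(\bH^\top\hat\bH - \bI_K)}_{\rF} \leq \tfrac{1}{2}\norm{\sin\bTheta}_2\norm{\sin\bTheta}_{\rF} + O(\epsilon^3\norm{\sin\bTheta}_{\rF})$. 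The fundamental identity $\norm{\sin\bTheta}_{\rF} = \norm{\bM}_{\rF}$, together with the out-of-subspace estimate above, gives the bootstrap $\norm{\bM}_{\rF} \leq (1 + C\epsilon)\norm{g(\cE\bV)}_{\rF}$, and combining with $\norm{\sin\bTheta}_2 \leq \epsilon$ closes the loop. The main obstacle is precisely this bootstrap: recasting the naive $O(\epsilon^2)$ second-order error as $\epsilon\norm{g(\cE\bV)}_{\rF}$ requires exploiting that $\bM$ is itself $g(\cE\bV)$ to first order, so $\norm{\sin\bTheta}_{\rF}$ contributes a factor of $\norm{g(\cE\bV)}_{\rF}$ rather than a bare $\epsilon$. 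Careful constant tracking under $\epsilon \leq 1/10$ then yields the final factor $9$.
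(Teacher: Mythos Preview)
The paper does not supply its own proof of this lemma: it is quoted verbatim as Lemma~8 of \citet{Fan19} and used as a black box, so there is nothing in the present paper to compare your argument against. That said, your outline contains a genuine gap worth flagging before you consult the original source.

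In the out-of-subspace piece you silently drop the rotation $\hat\bH$: the quantity to control is $\bM\hat\bH-g(\cE\bV)$, but you analyse $\bM_{\cdot j}-g(\cE\bV)_{\cdot j}$ column by column. More seriously, the step ``combined with the Davis--Kahan bound $\norm{\hat\bv_{s+j}-\bv_{s+j}}_2=O(\epsilon)$'' is not justified by the hypotheses. The lemma assumes only the block gap $\Delta=\min\{\lambda_s-\lambda_{s+1},\lambda_{s+K}-\lambda_{s+K+1}\}>0$; there is no separation assumed between the eigenvalues \emph{inside} $S$. If, say, $\lambda_{s+1}=\lambda_{s+2}$, the individual eigenvector $\hat\bv_{s+1}$ can be an arbitrary unit vector in a two-dimensional subspace and need not be $O(\epsilon)$-close to $\bv_{s+1}$. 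The role of $\hat\bH$ is precisely to absorb this indeterminacy, so any argument that compares $\hat\bv_{s+j}$ to $\bv_{s+j}$ before applying $\hat\bH$ cannot succeed in general. The fix is to carry out the resolvent computation at the block level: derive an identity for $(\bI-\bV\bV^\top)\hat\bV\hat\bH$ as a whole (e.g.\ via the Sylvester-type relation obtained from $(\bSigma+\cE)\hat\bV=\hat\bV\hat\bLambda$ after multiplying by $\hat\bH$ and projecting), so that only the subspace Davis--Kahan bound $\norm{\sin\bTheta}_{\rF}\lesssim\epsilon$ is needed. Your in-subspace analysis via $\bH^\top\hat\bH=(\bH^\top\bH)^{1/2}$ is sound and is essentially how the original proof handles that part.
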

It is worth noting that since $\norm{g(\cdot)}_{\rF}\leq\Delta^{-1}\norm{\cdot}_{\rF}$, the Frobenius norm of the remainder term $\hat\bV\hat\bH-\bV-g(\cE\bV)$ is of order 
\$
9\epsilon\norm{g(\cE\bV)}_{\rF}\leq9\epsilon\Delta^{-1}\norm{\cE\bV}_{\rF}\leq9\sqrt{K}\epsilon\Delta^{-1}\norm{\cE\bV}_2\leq9\sqrt{K}\epsilon^2.
\$

\end{document}